\newcommand{\E}[1]{\mathrm{E}\hspace{-1.0px}\left[#1\right]}
\newcommand{\Edist}[2]{\mathrm{E}_{\hspace{.6px}#1}\hspace{-2px}\left[#2\right]}
\newcommand{\entropy}[2]{\mathrm{H}_{#1}\hspace{-1.5px}\left(#2\right)}
\newcommand{\Prob}[1]{\mathrm{P}\hspace{-1.5px}\left(#1\right)}
\newcommand{\defeq}{\coloneqq}
\renewcommand{\cal}[1]{\mathcal{#1}}
\newcommand{\cV}{\cal{V}}
\newcommand{\cE}{\cal{E}}
\newcommand{\cX}{\cal{X}}
\newcommand{\cY}{\cal{Y}}
\newcommand{\cL}{\cal{L}}
\newcommand{\cS}{\cal{S}}
\newcommand{\cA}{\cal{A}}
\newcommand{\bb}[1]{\mathbb{#1}}
\newcommand{\bbD}{\bb{D}}  
\newcommand{\bbT}{\bb{T}}  
\newcommand{\bbG}{\bb{G}}
\newcommand{\bbB}{\bb{B}}
\newcommand{\myfrac}[2]{\sfrac{#1\hspace{-1.0pt}}{#2}}
\newcommand{\net}{g}
\newcommand{\comnet}[1]{N(#1)}
\newtheorem{theorem}{Theorem}[section]
\newtheorem{corollary}{Corollary}[section]
\newtheorem{lemma}{Lemma}[section]
\newtheorem{claim}{Claim}[section]
\theoremstyle{definition}
\newtheorem{definition}{Definition}[section]
\renewcommand{\paragraph}[1]{\vspace{1.0mm}\noindent\textbf{#1}}
\setlist[itemize]{itemsep=0.2em, labelsep*=0.8em} 
\newlength{\examplelength}
\title{How hard is to distinguish graphs \\with graph neural networks?}
\author{
  Andreas Loukas \\
  \'{E}cole Polytechnique F\'{e}d\'{e}rale Lausanne\\
  \texttt{andreas.loukas@epfl.ch} \\
}
\begin{document}

\maketitle

\begin{abstract}
A hallmark of graph neural networks is their ability to distinguish the isomorphism class of their inputs. This study derives hardness results for the classification variant of graph isomorphism in the message-passing model (MPNN). MPNN encompasses the majority of graph neural networks used today and is universal when nodes are given unique features. The analysis relies on the introduced measure of \textit{communication capacity}. Capacity measures how much information the nodes of a network can exchange during the forward pass and depends on the depth, message-size, global state, and width of the architecture. It is shown that the capacity of MPNN needs to grow linearly with the number of nodes so that a network can distinguish trees and quadratically for general connected graphs. The derived bounds concern both worst- and average-case behavior and apply to networks with/without unique features and adaptive architecture---they are also up to two orders of magnitude tighter than those given by simpler arguments. An empirical study involving 12 graph classification tasks and 420 networks reveals strong alignment between actual performance and theoretical predictions. 
\end{abstract}

\section{Introduction}
\label{sec:intro}

A fundamental goal in the analysis of graph neural networks is to determine under what conditions current networks can (or perhaps cannot) distinguish between different graphs~\citep{maron2019universality,keriven2019universal,xu2018powerful,morris2019weisfeiler,DBLP:journals/corr/abs-1905-12560,chen2020can}. 
The most intensely studied model in the literature has been that of message-passing neural networks (MPNN). Since its inception by~\citet{scarselli2008graph}, MPNN has been extended to include edge~\citep{gilmer2017neural} and global features~\citep{battaglia2018relational}. The model also encompasses many of the popular graph neural network architectures used today~\citep{kipf2016semi,xu2018powerful,hamilton2017inductive,li2015gated,duvenaud2015convolutional,battaglia2016interaction,kearnes2016molecular,simonovsky2017dynamic}.

Roughly two types of analyses of MPNN may be distinguished.
The first bound the expressive power of \textit{anonymous} networks, i.e., those in which nodes do not have any access to node features (also known as labels or attributes) and that are permutation equivariant by design. \citet{xu2018powerful} and \citet{morris2019weisfeiler} established the equivalence of anonymous MPNN to the 1st-order Weisfeiler-Lehman (1-WL) graph isomorphism test. 
A consequence of this connection is that anonymous MPNN cannot distinguish between regular graphs with the same number of nodes, but can recognize trees as long as the MPNN depth exceeds the tree diameter. 
Other notable findings include the observation that MPNN cannot count simple subgraphs~\citep{chen2020can}, as well as the analysis of the power of particular architectures to compute graph properties~\citep{dehmamy2019understanding,garg2020generalization} and to distinguish graphons~\citep{magner2020power}---see also~\citep{geerts2020let,Sato2020ASO,DBLP:journals/corr/abs-1905-12560}. 

The aforementioned insights can be pessimistic in the \textit{non-anonymous} case, where permutation equivariance is either learned from data~\citep{murphy2019relational,Loukas2020a} or obtained by design~\citep{vignac2020building}.  
With node features acting as identifiers, MPNN were shown to become universal \textit{in the limit}~\citep{Loukas2020a}, which implies that they can solve the graph isomorphism testing problem if their size is allowed to depend exponentially on the number of nodes~\cite{DBLP:journals/corr/abs-1905-12560}. The node features, for instance, may correspond to a one-hot encoding~\citep{kipf2016semi,berg2017graph,murphy2019relational} or a random coloring~\citep{dasoulas2019coloring,sato2020random}.

At the same time, universality statements carry little insight about the power of practical networks, as they only account for behaviors that occur in the limit. 
Along those lines,  
recent work provided evidence that the power of MPNN grows as a function of depth and width for certain graph problems~\citep{Loukas2020a}, showing that (both anonymous and non-anonymous) MPNN cannot solve many tasks when the product of their depth and width does not exceed a polynomial of the number of nodes.  
Nevertheless, it remains an open question whether similar results hold also for problems relating to the capacity of MPNN to distinguish graphs.
Even further, it is unclear whether depth and width needs to grow with the number of nodes solely in the worst-case (as proven in~\citep{Loukas2020a}) or with certain probability over the input distribution.

\begin{figure}[t]
\begin{subfigure}[b]{0.32\columnwidth}
\includegraphics[width=1\columnwidth,trim=0mm 0mm 0mm 0mm]{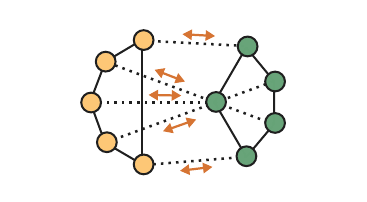}
\vspace{-0.4cm}\caption{communication capacity}
\label{fig:capacity}
\end{subfigure}
\hfill
\begin{subfigure}[b]{0.32\columnwidth}
\includegraphics[width=1.0\columnwidth,trim=0mm 0 0mm 0mm, clip]{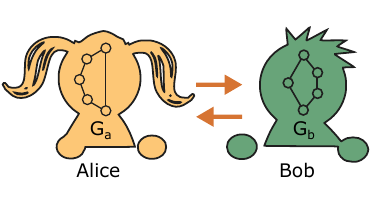}
\caption{communication complexity}
\label{fig:complexity}
\end{subfigure}
\hfill
\begin{subfigure}[b]{0.32\columnwidth}
\includegraphics[width=1.0\columnwidth,trim=0mm 0 0mm 0mm, clip]{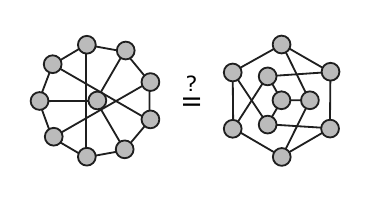}
\caption{graph isomorphism testing}
\label{fig:gi}
\end{subfigure}
\caption{(a) In MPNN, nodes exchange information by sending and receiving messages along edges. Communication capacity is the maximal amount of information that can be sent across two subgraphs (depicted in orange and green) (b) Communication complexity is the minimal amount of information 
needed so that two parties jointly compute a function $f$. (c) To determine whether graphs $G$ and $G'$ are isomorphic one may use an MPNN $\net$ to test whether $\net(G) = \net(G')$. \vspace{-3mm}}
\label{fig:main}
\end{figure}

\subsection{Communication capacity and its consequences to distinguishing graphs}

Aiming to study the power of MPNN of practical size to distinguish graphs, this paper defines and characterizes \textit{communication capacity}, a measure of the amount of information that the nodes of the network can exchange during the forward pass (see Figure~\ref{fig:capacity}). In Section~\ref{sec:mpnn} it is shown that the capacity of MPNN depends on the network's depth, width, and message-size, as well as on the cut-structure of the input graph. Communication capacity is an effective generalization of the previously considered product between depth and width~\citep{Loukas2020a}, being able to consolidate more involved properties, as well as to characterize MPNN with global state~\citep{gilmer2017neural,battaglia2018relational,ishiguro2019graph} and adaptive architecture~\citep{graves2016adaptive,spinelli2020adaptive,velivckovic2019neural,corso2020principal}.

The paper then delves into the \textit{communication complexity} of determining the {graph isomorphism} class. The theory of communication complexity compliments the definition of communication capacity as it provides a convenient mathematical framework to study how much information needs to be exchanged by parties that jointly compute a function~\citep{rao_behudayoff_2020} (see Figure~\ref{fig:complexity}). 
In this setting, Section~\ref{sec:communication_complexity} derives    
hardness results for determining the isomorphism class of connected graphs and trees. It is shown that the communication capacity of any MPNN needs to grow at least {linearly} with the number of nodes so that the network can learn to distinguish {trees}, and {quadratically} to distinguish between {connected graphs}. 
The analysis stands out from previous relevant works that have studied subcases of isomorphism, such as subgraph freeness~\citep{even2017three,gonen2018lower} or those focused on anonymous networks~\citep{xu2018powerful,morris2019weisfeiler,DBLP:journals/corr/abs-1905-12560,chen2020can,dehmamy2019understanding,magner2020power}. In fact, the derived hardness results apply to \textit{both} anonymous and non-anonymous MPNN and can be up to two orders of magnitude tighter than what can be deduced from simpler arguments. 
In addition, the proposed lower bounds rely on a new technique which renders them applicable not only to worst-case instances~\citep{Loukas2020a}, but in expectation over the input distribution.  

An empirical study reveals strong qualitative and quantitative agreement between the MPNN test accuracy and theoretical predictions. In the 12 graph isomorphism tasks considered, the performance of the 420 graph neural networks trained was found to depend strongly on their communication capacity. In addition, the proposed theory could consistently predict which networks would exhibit poor classification accuracy as a function of their capacity and the type of task in question.

\section{Communication complexity for message-passing networks}
\label{sec:mpnn}

Suppose that a learner is given a graph $G = (\cV, \cE, a)$ sampled from a distribution $\bbD$ that is supported over a finite universe of graphs $\cX$. Throughout this paper, $\cV$ will denote the set of nodes of cardinality $n$, $\cE$ the set of edges, and $a$ encodes any node and edge features of interest.
With $G$ as input, the learner needs to predict the output of function $f: \cX \to \cY$. This work focuses on graph classification, in which case $f$ assigns a class $y \in \cY$ (i.e., its isomorphism class) to each graph in the universe. 

\subsection{Message-passing neural networks (MPNN)}

In MPNN, the node representation $x_i^{(\ell)}$ of every node $v_i \in \cV$ is initialized to be equal to the node's attributes $x_{i}^{(0)} = a_i$ and is progressively updated by exchanging messages:
$$
            x_i^{(\ell)} = \textsc{Update}_{\ell} \big(x_i^{(\ell-1)}, \ \big\{ \text{msg}_{ij}^{(\ell)} \ : \ e_{ij} \in \cE \big\} \big)
        \quad \text{for} \quad \ell = 1, \ldots, d,
$$
where each message 
$$
\text{msg}_{ij}^{(\ell)} = \textsc{Message}_{\ell}\big(x_j^{(\ell-1)},\, a_j,\,  a_{ij} \big)
$$ 
contains some information that is sent to from node $v_j$ to $v_i$.

Every neuron in a network utilizes some finite alphabet $\cS$ containing $s = |\cS|$ symbols to encode its state. For this reason, $x_i^{(\ell)}$ and $\text{msg}_{ij}^{(\ell)}$ are selected from $\cal{S}^{w_{\ell}}$  and $\cal{S}^{m_{\ell}}$, where $w_\ell$ and $m_{\ell}$ are the width (i.e., number of channels) and the message-size of the $\ell$-th layer. For instance, to represent whether a neuron is activated one uses binary symbols, whereas a practical implementation could use as symbols the set of numbers represented in floating-point arithmetic. 
$\textsc{Message}_{\ell}$ and $\textsc{Update}_{\ell}$ are layer-dependent functions whose parameters are selected based on some optimization procedure. It is common to parametrize these functions by feed-forward neural networks~\citep{scarselli2008graph,li2015gated,battaglia2018relational}. The rational is that, by the universal approximation theorem and its variants~\citep{cybenko1989approximation,hornik1989multilayer,NIPS2017_7203}, these networks can approximate any smooth function that maps vectors onto vectors. 
If the network's output is required to be independent of the number of nodes, the output is recovered from the representations of the last layer by means of a readout function: 
$
\net(G) = \textsc{ReadOut}\big( \big\{ x_i^{(d)} : v_i \in \cV\big\} \big).
$
For simplicity, it is here assumed that no graph pooling is employed~\citep{ying2018hierarchical,bianchi2019mincut}, though the results may also be easily extended to account for coarsening~\citep{chen2011algebraic, safro2015advanced,loukas2018spectrally,loukas2019graph,jin2020graph}.
 
\textbf{Global state.} In the description above, all message exchange needs to occur along graph edges. However, one may also easily incorporate a \textit{global state} (or external memory) to the model above by instantiating a special node $v_0$ and extending the edge set to contain edges from every other node to it. Global state is useful for incorporating graph features to the decision making~\citep{battaglia2018relational} and there is some evidence that it can facilitate logical reasoning~\citep{barcelo2019logical}. Here, I will suppose that $x_0^{(\ell)}$ belongs to set $\cS^{\gamma_{\ell}}$.

\textbf{Adaptive MPNN.} The forward-pass of an MPNN concludes after $d$ layers. However, the depth of a network may be adaptive~\citep{graves2016adaptive,spinelli2020adaptive,velivckovic2019neural,corso2020principal}. In particular, $d$ may depend on the size and connectivity of the input graph or any adaptive computation time heuristic~\cite{graves2016adaptive,spinelli2020adaptive} based on, for example, the convergence of the node representation~\citep{velivckovic2019neural,corso2020principal}. In the same spirit, in the following it will supposed that \textit{all} hyper-parameters of an MPNN, such as its depth, width, message-size, and global state size, can be adaptively decided based on the input graph $G$.

\subsection{Communication capacity}  

An MPNN $\net$ can be thought of as a communication network $\comnet{G, \net}$, having processors as nodes and with connectivity determined by the input graph $G$. $\comnet{G, \net}$ operates in $\ell=1, \ldots, d$ synchronous communication rounds and $m_{\ell}$ symbols are transmitted in round $\ell$ from each processor $v_i$ to each one its neighbors $v_j$ such that $e_{ij} \in \cE$. Further, the processors have limited and round-dependent memory: in round $\ell$ the processors corresponding to nodes $\cV$ can store $w_\ell$ symbols, whereas the external memory processor $v_0$ can store $\gamma_{\ell}$ symbols.

The communication complexity of a message-passing neural network corresponds to the maximum amount of information that can be sent in $\comnet{G, \net}$ between disjoint sets of nodes:
\begin{definition}[Communication capacity]
Let $\net$ be an MPNN and fix a graph $G=(\cV, \cE)$.
For any two disjoint sets $\cV_a, \cV_b \subset \cV$, the communication capacity $c_{\net}$ of $\net$ is the maximum number of symbols that $\comnet{G, \net}$ can transmit from $\cV_a$ to $\cV_b$ and from $\cV_b$ to $\cV_a$. 
\label{def:capacity}
\end{definition}

To understand Definition~\ref{def:capacity}, imagine that the node-disjoint subgraphs $G_a = (\cV_a, \cE_a)$ and $G_b = (\cV_b, \cE_b)$ of $G$ are controlled by two parties: Alice and Bob (see Figure~\ref{fig:main}). In practice, Alice and Bob correspond to two sub-networks of $\net$. By construction, when Alice needs to send information to Bob, she does so by sending information across some paths that cross between $\cV_a$ and $\cV_b$. Bob does the same. From this elementary observation, it can be deduced that the number of symbols that can be sent during the forward pass is bounded by the cut between the two subgraphs:
\begin{lemma}
Let $\net$ be an MPNN of $d$ layers, where each has width $w_\ell$ (i.e., number of channels), exchanges messages of size $m_\ell$, and maintains a global state of size $\gamma_{\ell}$. For any disjoint partitioning of $\cV$ into $\cV_a$ and $\cV_b$, 
the communication complexity of $\net$ is at most 
$$
	 c_{\net} \leq \text{cut}(\cV_a, \cV_b) \sum_{\ell = 1}^d \min\{ m_\ell, w_{\ell}\} + \sum_{\ell = 1}^d \gamma_{\ell},
$$
with $\text{cut}(\cV_a, \cV_b)$ being the size of the smallest cut that separates $\cV_a$ and $\cV_b$ in $G$.
\label{lemma:maxflow}
\end{lemma}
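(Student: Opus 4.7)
The plan is to reduce the claim to a max-flow / min-cut style argument on a time-expanded version of $\comnet{G, \net}$. I would unroll the $d$ synchronous rounds into a layered DAG that contains $d+1$ copies of every node in $\cV$ (and of the global-state node $v_0$), connected by directed time-edges $v_i^{(\ell-1)} \to v_j^{(\ell)}$ whenever $e_{ij} \in \cE$, together with analogous edges to and from each $v_0^{(\ell)}$. Each time-edge is assigned a symbol capacity equal to how much information the corresponding round-$\ell$ message can carry. Since every symbol that a node in $\cV_b$ ever observes is a deterministic composition of messages along directed paths in this DAG, bounding the information that one side can acquire about the other's initial state reduces to upper bounding the capacity of an $(\cV_a, \cV_b)$ cut in the unrolled network.

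The natural cut to use is the one that, in every round $\ell$, separates copies of $\cV_a$ from copies of $\cV_b$ either by severing a graph edge of $G$ or by passing through $v_0^{(\ell)}$. For a cut-edge $e_{ij}$ in round $\ell$, the quantity crossing is $\text{msg}_{ij}^{(\ell)} = \textsc{Message}_{\ell}(x_j^{(\ell-1)}, a_j, a_{ij})$, an $m_\ell$-symbol message that is a deterministic function of the sender's $w_\ell$-symbol state; hence at most $\min\{m_\ell, w_\ell\}$ symbols can actually cross per cut-edge per round. Summing over the $\text{cut}(\cV_a, \cV_b)$ cut-edges and the $d$ rounds gives the first term of the bound. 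The alternate route through $v_0$ is bottlenecked by the size of its state, which contributes at most $\gamma_\ell$ symbols in round $\ell$, and $\sum_{\ell=1}^d \gamma_\ell$ once summed over rounds.

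The main obstacle will be ruling out clever strategies that amortize information across many rounds through narrow channels, and this is where the only real work lies. I would handle it by induction on $\ell$, exhibiting the joint round-$\ell$ state of $\cV_b \cup \{v_0\}$ explicitly as a deterministic function of (i) the initial states of $\cV_b$, (ii) a transcript of at most $\text{cut}(\cV_a,\cV_b) \sum_{k \le \ell} \min\{m_k, w_k\}$ symbols aggregating every message sent across the cut so far, and (iii) a transcript of at most $\sum_{k \le \ell} \gamma_k$ symbols recording the successive contents of the global state. The inductive step is immediate from the MPNN update rule, since each new round appends at most $\text{cut}(\cV_a,\cV_b)\min\{m_{\ell+1},w_{\ell+1}\} + \gamma_{\ell+1}$ fresh symbols to these transcripts. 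Evaluating the induction at $\ell = d$, and repeating the symmetric argument for information flowing from $\cV_b$ to $\cV_a$, yields the claimed bound on $c_{\net}$.
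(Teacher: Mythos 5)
Your argument reaches the same conclusion as the paper and is correct, but the technical route differs enough to be worth contrasting. The paper proves the lemma layer by layer: for each round $\ell$ it constructs a \emph{static} flow network with super-source $A$ (connected to $\cV_a$ by infinite-capacity edges), super-sink $B$, each node split into an in/out pair joined by an internal edge of capacity $w_\ell$ (respectively $\gamma_\ell$ for $v_0$), and each graph edge given capacity $m_\ell$. It then invokes the max-flow min-cut theorem and observes that any $A$--$B$ cut disjoint from $v_0$ must cross both an internal edge and a graph edge, yielding the $\text{cut}(\cV_a,\cV_b)\min\{m_\ell,w_\ell\}$ term, with the $\gamma_\ell$ term coming from allowing the cut to pass through $v_0$'s internal edge; the final bound is the sum over layers. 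You instead unroll all $d$ rounds into a single time-expanded DAG and exhibit a particular cut, then back it up with an induction showing that the joint state of $\cV_b \cup \{v_0\}$ is a deterministic function of a transcript of the prescribed size. Your version is in a sense more elementary and more self-contained: you never need the ``hard'' direction of max-flow min-cut (you only upper bound by a specific cut), and your transcript induction makes explicit the step -- left implicit in the paper -- that bounding the per-round cut capacity really does bound the information any party can acquire. The paper's version is more compact and visual (the node-splitting picture in its Figure is immediate) and leans on a standard theorem to avoid the bookkeeping. Two small points to tighten in your writeup: the message $\text{msg}_{ij}^{(\ell)}$ is a function of $x_j^{(\ell-1)}$, so strictly the per-edge bottleneck is $\min\{m_\ell, w_{\ell-1}\}$ (the paper shares the same slight index mismatch, and it is immaterial asymptotically); and when you say ``repeat the symmetric argument'' you should note, as the paper does in the remark following the lemma, that if the MPNN sends messages in both directions across each edge one counts each cut edge twice, so the two directional transcripts combine to the stated bound rather than doubling it.
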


Whenever the MPNN involves sending for each $e_{ij} \in \cE$ two messages, i.e., one from $v_i$ to $v_j$ and one from $v_j$ to $v_i$, every edge should be counted twice in the calculation of $\text{cut}(\cV_a, \cV_b)$.

It is also interesting to remark that $c_{\net}$ may be a random quantity. In particular, when $G$ is sampled from a distribution $\bbD$, the capacity of an \textit{adaptive} MPNN, i.e., a network whose hyper-parameters change as a function the input, may vary as well. For this reason, the analysis will also consider the expected communication capacity $c_\net(\bbD)$ of $\net$ w.r.t. $\bbD$.

\subsection{Communication complexity}
\label{subsec:communication_complexity_definitions}

Let us momentarily diverge from graphs and suppose that Alice and Bob wish to jointly compute a function $f: \cX_a \times \cX_b \to \cY$ that depends on both their inputs. Alice's input is an element $x_a\in \cX_a$ and Bob sees an element $x_b \in \cX_b$. Later on, $x_a$ and $x_b$ will correspond to $G_a$ and $G_b$, respectively, whereas $y \in \cY$ will be the classification output (see Figure~\ref{fig:complexity}). 

To compute $f(x_a,x_b)$, the two parties need to exchange information based on some communication \textit{protocol} $\pi$. Concretely, $\pi$ determines for each input ($x_a, x_b$) the sequence $\pi(x_a,x_b) = ((\textsc{id}_1,s_1), (\textsc{id}_2,s_2), \ldots)$ of symbols that are exchanged, with each symbol $s_i \in \cS$ being paired with the id of its sender (Alice or Bob)---for a more detailed description, see Appendix~\ref{app:com_complexity}. 
The number of symbols exchanged by $\pi$ to successfully compute $f(x_a,x_b)$ are denoted by $\|\pi(x_a,x_b)\|_{m}$, with subscript $m \in \{\textit{one},\textit{both}\}$ indicating whether ``successful computation'' entails one or both parties figuring out $f(x_a,x_b)$ at the end of the exchange. 

\paragraph{Worst-case complexity.}
The focus of classical theory is on the worst-case input. The \textit{communication complexity}~\citep{rao_behudayoff_2020} of $f$ is defined as
\begin{align}
    c_f^\textit{\hspace{1px}m} \defeq \min_{\pi} \max_{(x_a,x_b) \in \cX_a \times \cX_b} \|\pi(x_a,x_b)\|_m
\end{align}
and corresponds to the minimum worst-case length of any protocol that computes $f$.

\paragraph{Expected complexity.} In machine learning, one usually cares about the expected behavior of a learner when its input is sampled from a distribution. Concretely, let $(X_a,X_b)$ be random variables sampled from a distribution $\bbD$ with domain $\cX_a\times \cX_b$. The expected length of a protocol $\pi$ is
\begin{align}
    \Edist{\bbD}{c_f^m (\pi)} \defeq \sum_{(x_a,x_b) \in \cX_a \times \cX_b} \|\pi(x_a,x_b)\|_m  \cdot \Prob{X_a=x_a, X_b=x_b},
\end{align}
where now the protocol length $\|\pi(x_a,x_b)\|_m $ is weighted according to the probability of each input. 
With this in place, I define the \textit{expected communication complexity} of $f$ as 
\begin{align}
    c_f^m ({\bbD}) \defeq \min_{\pi} \Edist{\bbD}{c_f^m (\pi)},
\end{align}
corresponding to the minimum expected length of any protocol that computes $f$.

For an overview of the classical theory of communication complexity pertaining to the worst-case and an analysis of the newly-defined expected complexity, the reader may refer to Appendix~\ref{app:com_complexity}.

To use communication complexity for learning problems $f: \cX \to \cY$ from a graph universe $\cX$ to a set of classes $\cY$ one needs to decompose (a subset of) $\cX$ as $\cX_a \times \cX_b$. As it will be seen in the following sections, the decomposition can be achieved by finding a disjoint partitioning of every graph $G\in \cX$ into subgraphs $G_a \in \cX_a$ and $G_b \in \cX_b$, held by Alice and Bob, respectively. Then, in the worst case, $c_f^\textit{\hspace{1px}m}$ symbols need to be exchanged so that one (\textit{m}=\textit{one}) or both (\textit{m}=\textit{both}) parties can correctly classify $G$ into class $y = f(G)$. Moreover, if $G$ is sampled from some distribution $\bbD$, then the two parties need to exchange at least $c_f^\textit{m}({\bbD})$ symbols in expectation. 
Together with Lemma~\ref{lemma:maxflow}, the aforementioned bounds can be used to characterize what an MPNN cannot achieve as a function of its worst-case and expected capacity.

\section{Hardness results for determining the isomorphism class}
\label{sec:communication_complexity}

This section derives necessary conditions for the communication capacity of a network that determines the graph isomorphism class of its inputs. 
This entails finding a mapping $f_\text{isom}: \cX \to \cY$ from a universe of labeled graphs to their corresponding isomorphism classes. Crucially, though the nodes of graph $G$ are assigned some predefined order (which constitutes their label in graph-theory nomenclature), the class $f_\text{isom}(G)$ should be invariant to this ordering.

As it will be shown, MPNN of sub-quadratic and sub-linear capacity cannot compute the isomorphism class of connected graphs and trees, respectively: 

\begin{theorem}
Let $\net$ be a MPNN using either a majority-voting or a consensus based readout (defined in Section~\ref{subsec:consequences_mpnn}). Denote by $c_\net$ its communication capacity.  
\begin{enumerate}
\item To compute $f_{\text{isom}}$ for every graph and tree of $n$ nodes, it must be that
$
c_\net = \Omega\left(n^2\right)$ 
and
$c_\net = \Omega\left(n\right)$, respectively.
 
\item If each graph is sampled from $\bbB_{n/2,p}$ (defined in Theorem~\ref{theorem:graph_isomorphism_random}) to compute $f_{\text{isom}}$ in expectation it must be that
$
c_\net(\bbD) = \Omega\left(n^2\right). 
$
Further, if each graph is a tree sampled from $\bbT_{n/2}$ (defined in Theorem~\ref{theorem:tree_isomorphism}) to compute $f_{\text{isom}}$ in expectation it must be that
$
c_\net(\bbD) = \Omega\left(n\right). 
$
\end{enumerate}
\label{theorem:main}
\end{theorem}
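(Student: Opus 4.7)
The plan is to reduce a hard communication-complexity problem to the computation of $f_{\text{isom}}$ and then translate the resulting bit lower bound into a capacity lower bound via Lemma~\ref{lemma:maxflow}. For every candidate input $G$ on $n$ nodes, I would fix a balanced partition $(\cV_a,\cV_b)$ of its node set, think of the induced subgraphs $G_a$ and $G_b$ as being held by Alice and Bob respectively, and observe that, because the readout is majority- or consensus-based, both sub-networks must agree on the isomorphism class. This puts us in the \textit{both}-variant of communication complexity, which is exactly what Lemma~\ref{lemma:maxflow} upper-bounds by $c_\net$.

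For the worst-case $\Omega(n^2)$ statement on connected graphs, I would design a family $\{G(x_a,x_b)\}$ in which Alice encodes an arbitrary $N = \Theta(n^2)$-bit string $x_a$ via her intra-half edges and Bob encodes $x_b$ analogously, gluing the two halves with a canonical bridge. Attaching rigid gadgets---for instance, pendant paths of distinct lengths at each node, together with an asymmetric marker distinguishing the two halves---eliminates every non-trivial automorphism, so that the isomorphism class of $G(x_a,x_b)$ determines the ordered pair $(x_a,x_b)$. A direct reduction from \textsc{Equality} on $N$ bits, whose deterministic \textit{both}-variant complexity is $\Theta(N)$, then yields the $\Omega(n^2)$ symbol lower bound after dividing by $\log s$. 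The tree statement is proved in the same way, but using caterpillar-like trees in which Alice and Bob each encode only a $\Theta(n)$-bit string (consistent with the fact that there are $\exp(\Theta(n))$ non-isomorphic trees on $n/2$ nodes, so one cannot hope for more than $\Omega(n)$).

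For the expected-case part I would use an entropy argument together with standard random-graph facts. A graph drawn from $\bbB_{n/2,p}$ has trivial automorphism group with probability $1-o(1)$ for $p$ bounded away from $0$ and $1$; combining Alice's and Bob's independently sampled halves with a fixed asymmetric bridge then makes the isomorphism class a bijective function (with high probability) of the ordered pair of halves, whose entropy is $\Theta(n^2)$. The conditional entropy of the class given either party's input remains $\Theta(n^2)$, so the expected-complexity machinery of Section~\ref{subsec:communication_complexity_definitions} forces any protocol---and hence any MPNN with a majority/consensus readout---to transmit $\Omega(n^2)$ symbols across the cut in expectation. The tree analogue proceeds identically, using that $\bbT_{n/2}$ concentrates on isomorphism classes of entropy $\Theta(n)$ and that typical trees in the support are sufficiently rigid that the class determines the pair of halves up to a negligible ambiguity.

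The main obstacle is the rigidity-gadget step: the gadgets must simultaneously (i) be small enough that the combined graph still has $O(n)$ nodes, (ii) kill every non-identity automorphism including the Alice-Bob swap, (iii) for the tree bound, not destroy the tree structure of the combined graph, and (iv) for the expected-case statement, either lie in the support of the stated distribution or reduce to it with only constant-factor loss in entropy. A secondary step is converting the classical bit-based communication bounds into symbol-based bounds (via the $\log s$ factor) and carefully justifying that the cut occurring in Lemma~\ref{lemma:maxflow} can be made $O(1)$ for the constructed family so that the capacity of $\net$ directly controls the number of cross-cut symbols.
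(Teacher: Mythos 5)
Your high-level framing matches the paper: view the MPNN as Alice (holding $G_a$) and Bob (holding $G_b$) communicating across a small cut, lower-bound the symbols they must exchange to determine $f_{\text{isom}}$, and compare against the capacity bound of Lemma~\ref{lemma:maxflow}. However, the technique you propose for the communication lower bound is materially different from the paper's and has unresolved gaps.

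First, you place both readout types in the \emph{both}-variant of communication complexity, on the grounds that ``both sub-networks must agree on the isomorphism class.'' That is incorrect for majority-voting: under the paper's definition the set $\mathcal{M}_G$ has only $\mu = O(1)$ correct nodes, which may all lie in $\cV_a$. So majority-voting corresponds to the \emph{one}-variant, and the paper separately derives $c_f^{\textit{one}}$ lower bounds via Lemma~\ref{lemma:c_single}, which converts a \emph{both}-bound to a \emph{one}-bound at roughly a factor-two loss; consensus is the case that gives \emph{both}. Your plan leaves the majority-voting case unaddressed.

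Second, the core lower-bound argument differs. The paper does not reduce from \textsc{Equality} nor attach any rigidity gadgets. For the worst case it directly counts isomorphism classes: with a $\tau=1$ cut, a glued graph's class (essentially) determines the unordered pair of half-classes, so there are at least $c(v)(c(v)+1)/2$ classes and hence at least that many monochromatic rectangles; Corollary~\ref{cor:max_rectangle_bound} then gives $c_f^{\textit{both}} \gtrsim 2\log_s c(v) = \Omega(v^2)$. For the expected case, Lemma~\ref{lemma:entropy_lower_bound} bounds $c_f^{\textit{both}}(\bbD)$ by the entropy of $f_{\text{isom}}(G)$ given $G_c$, and that entropy is computed explicitly; the place where you would invoke rigidity is handled instead by Claim~\ref{claim:i_c_v}, which shows each isomorphism class has at most $2(v!)^2$ labeled representatives (a ``bounded ambiguity'' bound, much weaker than rigidity but losing only $O(v\log v)$ bits). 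Your rigidity-gadget route faces exactly the obstacles you flag: the gadgets enlarge the graph (changing $n$), are not available inside the fixed distributions $\bbB_{n/2,p}$ and $\bbT_{n/2}$ (where the cut must be exactly $1$ and the halves must be sampled as stated), and clash with the requirement that trees stay trees. You identify these as problems but do not resolve them, whereas the paper's counting/entropy argument sidesteps them entirely. Also, the ``reduction from \textsc{Equality}'' is not the right primitive here: $f_{\text{isom}}$ is not Boolean, and what your construction would actually establish is an output-cardinality (or output-entropy) lower bound, which is precisely the rectangle/entropy argument the paper uses directly; routing through \textsc{Equality} adds no leverage and risks circularity (you would essentially be re-proving the rectangle bound to justify the reduction).

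In short: your framework and final asymptotics are aligned with the paper, but the \textsc{Equality}-plus-rigidity plan is not carried out, the readout distinction (\emph{one} vs. \emph{both}) is misassigned, and the gadget step would need to be replaced by the paper's gadget-free class-counting argument (worst case) and conditional-entropy computation with the $2(v!)^2$ ambiguity bound (expected case) to close the proof.
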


For general graphs, these results are one or two orders of magnitude tighter than arguments that compare the receptive field of a neural network with the graph diameter. Specifically, connected graphs have diameter at most $n$ and thus a diameter analysis yields $d = \Omega(n)$ without a global state and $d = \Omega(1)$ with one (as any two nodes are connected by a path passing through $v_0$).

The tree distribution was chosen purposefully to demonstrate that the bounds are also relevant for the anonymous case, when MPNN can also be analyzed by equivalence to the 1-WL test~\citep{xu2018powerful,morris2019weisfeiler}. For trees, the 1-WL test requires $n$ iterations because there exists a tree of diameter $n$. However, since MPNN is equivalent to 1-WL only when the former is built using injective aggregation functions (i.e., of unbounded width~\citep{xu2018powerful,morris2019weisfeiler,DBLP:journals/corr/abs-1905-13192}), the equivalence does not imply a relevant lower bound on the width/message-size/global-state-size of MPNN. Further, the communication complexity analysis introduced here yields tighter results in expectation: it asserts that one needs $\Omega(n)$ capacity on average, even though the average tree in $\bbT_{n/2}$ has $O(\sqrt{n})$ diameter (and thus 1-WL would require $d=\Omega(\sqrt{n})$ in expectation).

\textbf{Graph isomorphism testing.} There is also a close relation between $f_\text{isom}$ and the graph isomorphism testing problem (see Figure~\ref{fig:gi}). Specifically, methods for isomorphism testing~\citep{xu2018powerful,morris2019weisfeiler,chen2020can} that compare graphs $G$ and $G'$ by means of some invariant representation or embedding 
$$
\net(G) = \net(G') \quad  \text{if and only if} \quad f_\text{isom}(G) = f_\text{isom}(G')
$$
can be expressed as $g = q \circ f_{\text{isom}}$ for some injective function $q$. Since $q$ does not involve any exchange of information, {the communication complexity of such testing methods is the same as that of $f_\text{isom}$}. The proposed hardness results thus still hold.

The rest of this section is devoted to proving Theorem~\ref{theorem:main}. The analysis consists of two parts: the communication complexity of distinguishing graphs and trees is derived in Section~\ref{subsec:com_comp_analysis}, and the implications of these results to MPNN are discussed in Section~\ref{subsec:consequences_mpnn}. 

\subsection{Communication complexity analysis}
\label{subsec:com_comp_analysis}

\begin{figure}[t]
    \centering
    \includegraphics[width=0.52\columnwidth]{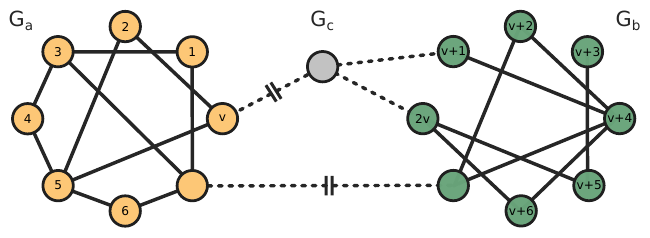}
    \caption{A visual depiction of a graph $G = (\cV, \cE)$ chosen from $\cX$. $G_a$ (in yellow) and $G_b$ (in green) are chosen from families $\cX_a$ and $\cX_b$ of graphs with $v$ nodes. The edges of $G_c$ (dashed lines) may connect to any node but should induce a $(\cV_a, \cV_b)$-cut of at most $\tau$. }
    \label{fig:my_label}
\end{figure}

Rather than focusing directly on the universe of all graphs and trees, respectively, it will be convenient to analyze a strictly smaller universe $\cX$ containing easily partitioned graphs. As it will be seen, we can utilize such a restriction without significant loss of generality, because the derived worst-case impossibility results also apply to any universe that is a strict superset of $\cX$. 

Concretely, $\cX$ will consist of all labeled graphs $G = (\cV, \cE)$ on $n$ nodes admitting to the following $(\cX_a, \cX_b, \tau)$ decomposition: 
\begin{enumerate}
    \item[(a)] Subgraph $G_a = (\cV_a, \cE_a)$ induced by labels $\cV_a = (1, 2, \cdots, v)$  belongs to $\cX_a$.
    \item[(b)] Subgraph $G_b = (\cV_b, \cE_b)$ induced by labels $\cV_b = (v+1, v+2, \cdots, 2v)$  belongs to $\cX_b$.
    \item[(c)] Subgraph $G_c = (\cV, \cE \setminus (\cE_a \cup \cE_b))$ yields $\textit{cut}(\cV_a, \cV_b) \leq \tau$. 
\end{enumerate}

An example $(\cX_a, \cX_b, \tau)$ decomposable graph is depicted in Figure~\ref{fig:my_label}. This decomposition is fairly general: the main restriction placed is that the cut between $\cV_a$ and $\cV_b$ is bounded by $\tau$. Families $\cX_a$ and $\cX_b$ can be chosen to contain relevant families of graphs (e.g., all connected graphs or all trees), whereas $G_c$ may be selected arbitrarily. 
To derive lower bounds, it will be imagined that $G_a$ and $G_b$ are known by Alice and Bob, respectively, while both know $G_c$.
The goal of the two parties is to determine $f_\text{isom}(G)=f_\text{isom}(G_a, G_b, G_c)$ by exchanging as little information as possible.  

Two main results will be proven: Section~\ref{subsec:graph_isomorphism} will show that, when $\cX_a$ and $\cX_b$ contain all labeled connected graphs on $v$ nodes, the worst-case and expected communication complexity are both $\Theta(v^2)$. Moreover, in Section~\ref{subsec:tree_isomorphism} it is proven that, when $\cX_a$ and $\cX_b$ contain only trees, the two complexities are $\Theta(v)$. 
%

\subsubsection{Distinguishing connected graphs}
\label{subsec:graph_isomorphism}

When $\cX_a$ and $\cX_b$ contain all connected graphs on $v$ nodes, Alice and Bob should exchange $\Theta(v^2)$ symbols in the worst case: 
\begin{theorem}[Worst-case complexity]
When $\cX_a$  and $\cX_b$ each contain the set of all connected graphs on $v$ nodes, the worst-case communication complexity of $f_\text{isom}$ is at least 
$$
O(v^2) = c_{f_\text{isom}}^\textit{both} 
	\geq \frac{v^2}{\log_2{s}} - 2v\log_s{\left(\frac{v\sqrt{2}}{e}\right)} - \log_s{\left(2 ve^2\right)} + o(1) = \beta = \Omega\left(v^2\right)
$$
and $O(v^2) = c_{f_\text{isom}}^{\textit{one}} \geq \frac{\beta - (\log_2{s})^{-1}}{2} = \Omega\left(v^2\right)$.
\label{theorem:graph_isomorphism_wc}
\end{theorem}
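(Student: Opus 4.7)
The plan is to prove the bound via a direct counting argument on the image of $f_{\text{isom}}$, combined with the elementary observation that a deterministic protocol of worst-case length $c$ has at most $s^c$ distinct transcripts. Since in the \textit{both} setting both parties output the same value as a function of the transcript, this immediately yields
$$
s^{c_{f_{\text{isom}}}^{\textit{both}}} \;\geq\; |\mathrm{Image}(f_{\text{isom}})|,
$$
so it suffices to lower bound the image size. I would do this by picking $G_c$ to be the empty graph on $\cV$ (corresponding to $\tau = 0$), so that the combined graph is simply the disjoint union $G_a \sqcup G_b$ of two connected graphs on $v$ vertices.

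The core combinatorial fact is that any isomorphism of $G_a \sqcup G_b$ must permute its connected components. Since $G_a,G_b$ are both connected, the isomorphism class of the combined graph depends only on the unordered multiset $\{[G_a],[G_b]\}$ of component iso classes, and conversely distinct multisets yield distinct iso classes. Writing $C_v$ for the number of isomorphism classes of connected graphs on $v$ vertices, this gives $|\mathrm{Image}(f_{\text{isom}})| \geq C_v(C_v-1)/2$. To lower bound $C_v$, I would orbit-count: each iso class is an orbit of $\mathrm{Sym}(v)$ of size at most $v!$, so $C_v \geq c(v)/v!$, where $c(v)$ is the number of labeled connected graphs. A standard union bound gives $c(v) \geq 2^{\binom{v}{2}}(1 - o(1))$, since a vanishing fraction of labeled graphs is disconnected.

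Applying Stirling's approximation $v! = \sqrt{2\pi v}\,(v/e)^v(1+o(1))$ and repackaging the correction $v/\log_2 s = 2v\log_s\sqrt{2}$ yields
$$
2 \log_s C_v \;\geq\; \frac{v^2}{\log_2 s} \;-\; 2v \log_s\!\left(\frac{v\sqrt{2}}{e}\right) \;-\; \log_s(2\pi v) \;+\; o(1),
$$
which, via $c_{f_{\text{isom}}}^{\textit{both}} \geq 2\log_s C_v - \log_s 2$, gives the claimed $\beta$ lower bound up to the precise form of the trailing logarithm. For the one-sided bound I would use a row-distinctness argument: in any protocol where one party alone learns the output, the set of transcripts observable for a fixed input on her side has size at most $s^{c^{\textit{one}}}$, and for fixed $G_a$ the outputs $f_{\text{isom}}(G_a,G_b,G_c)$ take $C_v$ distinct values as $[G_b]$ ranges over connected iso classes. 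Hence $c_{f_{\text{isom}}}^{\textit{one}} \geq \log_s C_v \geq (\beta - \log_s 2)/2 + o(1)$ upon substituting the displayed inequality.

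The combinatorial core is transparent; the main obstacle is the careful bookkeeping of the lower-order terms so that the Stirling correction for $v!$ and the $(1-o(1))$ loss from ``almost all graphs are connected'' together absorb into the stated constant $\log_s(2ve^2)$ without damaging the leading $v^2/\log_2 s$. A less symmetric choice of $G_c$, such as a single bridge $\{v_1,v_{v+1}\}$, would force one to count rooted iso classes and to worry about automorphisms fixing the root; using the empty $G_c$ bypasses this and keeps the inequalities tight all the way down.
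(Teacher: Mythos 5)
Your argument tracks the paper's own proof quite closely: both proceed by lower-bounding the number of distinct isomorphism classes $f_{\text{isom}}$ can take on the universe, using the factorization $C_v \geq |\cX_a|/v!$ with a Stirling-type bound on $v!$ and the connected-graph count $|\cX_a| = 2^{\binom{v}{2}}(1-o(1))$, and then invoking the elementary fact that a depth-$c$ $s$-ary protocol tree has at most $s^c$ leaves, each of which must carry a single output value. Your count $C_v(C_v-1)/2$ should read $C_v(C_v+1)/2$ (unordered pairs with repetition, since $[G_a]=[G_b]$ is allowed), but this is a constant and does not affect $\beta$.

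Where you genuinely differ from the paper is in two places, and both are arguably improvements in clarity. First, you explicitly pin $G_c$ to the empty graph, whereas the paper never fixes $G_c$ and instead asserts the count ``assumes that there do not exist any classes that differ only w.r.t.\ $G_c$.'' Your disjoint-union argument makes the rectangle-counting rigorous with no extra assumption: isomorphisms of $G_a \sqcup G_b$ with both parts connected of equal size must permute components, so the class really is the unordered multiset $\{[G_a],[G_b]\}$. Second, your one-sided bound is a direct transcript-counting argument (for fixed $G_a$, the output is a function of the transcript alone, and $C_v$ distinct outputs force $C_v$ distinct transcripts), whereas the paper routes through Lemma~\ref{lemma:c_single}, a protocol-conversion argument. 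Your route is shorter and in fact yields a marginally tighter constant, $c_{f_\text{isom}}^{\textit{one}} \geq \log_s C_v \geq (\beta + (\log_2 s)^{-1})/2$, versus the paper's $(\beta - (\log_2 s)^{-1})/2$.

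One caveat you should be aware of: your choice $G_c = \emptyset$ gives a disconnected combined graph $G$. The communication-complexity lower bound is still formally correct (the universe in Theorem~\ref{theorem:graph_isomorphism_wc} does not forbid $\tau=0$), but when this theorem is fed into Lemma~\ref{lemma:mpnn-complexity} to get an MPNN capacity bound, the cut in Lemma~\ref{lemma:maxflow} is zero for those inputs, so the resulting constraint on a network without global state becomes vacuous ($0 \geq \Omega(v^2)$ just says such a network fails, trivially). To get the intended constraint on the edge-message capacity one would ultimately need the same $\Omega(v^2)$ lower bound to hold already over the $\tau=1$ sub-universe (single bridge). You anticipate this and point out that the bridge case forces a rooted-isomorphism count; your instinct that this is messier but doable is right, and the paper itself glosses over this by never fixing $G_c$. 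This is worth flagging, though it is a shared limitation rather than a defect specific to your write-up.
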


A similar bound holds also in the \textit{random graph model} $\bbG_{v,p}$. In $\bbG_{v,p}$, every graph with $v$ nodes and $k$ edges is sampled with probability  
$$
    \Prob{G \sim \bbG_{v,p}} = p^{k} (1-p)^{\binom{v}{2} - k}.
$$
Effectively, this means the probability of choosing each graph depends only on the number of edges it contains. Moreover, for $p=0.5$ each graph is sampled uniformly at random from the set of all possible graphs.  
The following theorem bounds the expected communication complexity when the subgraphs known to Alice and Bob are sampled from $\bbG_{v,p}$:

\begin{theorem}[Expected complexity]
Let $G_a$ and $G_b$ be sampled independently from $\bbG_{v,p}$, with $\log{v}/v < p < 1-s^{\Omega(1)}$ and $\text{cut}(\cV_a, \cV \setminus \cV_a) = \text{cut}(\cV_b, \cV \setminus \cV_b) = 1$. Denote by $\bbB_{v,p}$ the resulting distribution. With high probability, 
$$
    O(v^2) = c_{f_\text{isom}}^\textit{both}(\bbB_{v,p}) 
    \geq v^2 \, \entropy{s}{p} - v \left( 2 \log_s{\left(\frac{v}{e} \right)} + \entropy{s}{p}\right) - \log_s{\left(2ve^2\right) } = \beta 
 = \Omega(v^2)
$$
and
$$    
	O(v^2) = c_{f_\text{isom}}^{\textit{one}}(\bbB_{v,p}) \geq \frac{ \beta}{2} - \frac{v^2 - v(1-\entropy{2}{p}) + 1}{2 \log_2{s}} = \Omega(v^2),
$$
where $\entropy{s}{p} = -(1-p) \log_s{(1-p)} - p \log_s{p}$ is the binary entropy function (base $s$). 
\label{theorem:graph_isomorphism_random}
\end{theorem}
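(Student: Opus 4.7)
The plan is to reduce the computation of $f_\text{isom}$ to the distributed transmission of the pair of rooted isomorphism classes $(\tilde G_a, \tilde G_b)$, then combine a combinatorial entropy estimate with a standard information-theoretic lower bound on expected communication. For the combinatorial reduction, let $r_a \in \cV_a$ and $r_b \in \cV_b$ be the unique endpoints in $\cV_a, \cV_b$ of the two cross-edges, which are well defined because $\text{cut}(\cV_a, \cV\setminus\cV_a) = \text{cut}(\cV_b, \cV\setminus\cV_b) = 1$. Any automorphism of $G$ must preserve the cut, hence fix $\{r_a, r_b\}$, so with $G_c$ fixed and known to both parties, $f_\text{isom}(G)$ determines the pair $(\tilde G_a, \tilde G_b)$ of rooted iso classes of $G_a, G_b$ rooted at $r_a, r_b$ (up to a possible swap when $\tilde G_a = \tilde G_b$, an event whose contribution is absorbed into lower-order terms).

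Next I would estimate $H(\tilde G_a)$ under $\bbG_{v,p}$. In the range $\log v/v < p < 1 - s^{-\Omega(1)}$, a classical theorem guarantees that $G_a$ has trivial automorphism group with probability $1 - O(1/(ve^2))$. On this ``asymmetric'' event each rooted iso class has exactly $(v-1)!$ labeled representatives (the permutations of $\cV_a$ fixing $r_a$); combining counting with a union bound over the rare failure yields
\[
H(\tilde G_a)\;\ge\;\binom{v}{2}\,\entropy{s}{p}\;-\;\log_s\bigl((v-1)!\bigr)\;-\;\tfrac{1}{2}\log_s(2ve^2).
\]
Stirling gives $\log_s((v-1)!) \le v\log_s(v/e) + O(\log_s v)$, and the independence $G_a \perp G_b$ then yields $H(\tilde G_a) + H(\tilde G_b) \ge \beta$.

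For the lower bound proper I invoke the standard rectangle argument. In the ``both'' variant, the transcript $\pi$ of any deterministic protocol partitions $\cX_a\times\cX_b$ into combinatorial rectangles on each of which $f$ must be constant (Alice's output depends only on $(\pi,X_a)$, Bob's only on $(\pi,X_b)$, yet both must equal $f$). Hence $f$ is a function of $\pi$, and Shannon's source-coding theorem gives $c_{f_\text{isom}}^{\text{both}}(\bbB_{v,p}) \ge H(\pi) \ge H(f) = H(\tilde G_a) + H(\tilde G_b) \ge \beta$. In the ``one'' variant only one party, say Alice, must recover $f$, so $f$ is merely a function of $(\pi, X_a)$; using $H(f\mid X_a) \le H(\pi\mid X_a) \le \E{|\pi|}$ together with the independence $H(f\mid X_a) = H(\tilde G_b)$ yields $c_{f_\text{isom}}^{\text{one}}(\bbB_{v,p}) \ge H(\tilde G_b) \approx \beta/2$. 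The residual additive $-(v^2 - v(1-\entropy{2}{p}) + 1)/(2\log_2 s)$ is recovered from a converse ``both-from-one'' simulation in which the learner appends the $\log_s|\cY|$-symbol output to the transcript.

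The delicate step is the entropy estimate: matching the precise $\log_s(2ve^2)$ correction requires a sharp quantitative asymmetry bound for $\bbG_{v,p}$ valid throughout the stated $p$-range, coupled with careful accounting of the rooted-versus-unrooted counts without losing additional $\Theta(v)$ factors. The combinatorial reduction and the two information-theoretic lower bounds are, by comparison, essentially textbook once this accounting is in place.
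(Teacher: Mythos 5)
Your high-level scaffolding matches the paper: lower-bound $c^{\textit{both}}$ by the Shannon entropy of the output given $G_c$ (the paper's Lemma~\ref{lemma:leaf_entropy_lower_bound}, which you reproduce via source coding on the protocol transcript), then drop to $c^{\textit{one}}$ by the append-the-output simulation (the paper's Lemma~\ref{lemma:c_single}). The substantive divergence is how you control the size of an isomorphism class in the universe, and here your route has gaps.

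The paper's Claim~\ref{claim:i_c_v} bounds the number $i_c(v)$ of labeled representatives of any fixed class inside $\cX$ by $2(v!)^2$, using only that $G_a,G_b$ are connected: the $\tau=1$ cut constraint forces any relabeling $\Pi$ with $\Pi(G)\in\cX$ to preserve or swap the $(\cV_a,\cV_b)$ bipartition. This is a deterministic combinatorial fact, conditional solely on connectivity (hence the single $e^{-2e^{-\delta}}+o(1)$ high-probability event), and it delivers the $-2\log_s(v!)-\log_s 2$ correction that becomes $-2v\log_s(v/e)-\log_s(2ve^2)$ after Stirling. You instead invoke a quantitative asymmetry theorem for $\bbG_{v,p}$ with failure probability $O(1/(ve^2))$ across the entire range $\log v/v < p < 1-s^{-\Omega(1)}$. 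You do not supply such a theorem, the specific error form looks reverse-engineered from the target constant, and near the lower end of the $p$-range (where $p$ is barely above the connectivity threshold) getting that precise rate is genuinely nontrivial. The paper's argument sidesteps asymmetry entirely --- it does not need it because it settles for an upper bound on $i_c(v)$ rather than an exact count.

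Two further points in your reduction also need shoring up. First, ``any automorphism of $G$ must preserve the cut, hence fix $\{r_a,r_b\}$'' requires the cross edge to be the \emph{unique} bridge of $G$; connectivity of $G_a,G_b$ alone does not rule out additional bridges inside $G_a$ or $G_b$ that an isomorphism could confuse with the cross edge, so you secretly need something like 2-edge-connectivity of the halves on top of the asymmetry event. Second, as you note, $f_\text{isom}(G)$ with $G_c$ known determines only the \emph{unordered} pair $\{\tilde G_a,\tilde G_b\}$, so the inequality $H(f\mid G_c)\geq H(\tilde G_a)+H(\tilde G_b)$ you use is off by an additive term that needs to be folded in explicitly (the paper's factor of $2$ in $i_c(v)\leq 2(v!)^2$ is doing exactly this bookkeeping). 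None of these gaps is conceptually fatal --- the reduction through rooted classes is a reasonable alternative lens --- but as written the argument does not close, whereas the paper's cut-counting is self-contained and tighter in what it assumes.
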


The expected complexity, therefore, grows asymptotically with $\Theta(v^2)$ and is maximized when every graph in the universe is sampled with equal probability, i.e., for $p = 0.5$. Interestingly, in this setting, the bounds of Theorems~\ref{theorem:graph_isomorphism_wc} and Theorem~\ref{theorem:graph_isomorphism_random} match.
This implies that, unless there is some strong isomorphism class imbalance in the dataset, the communication complexity lower bound posed by Theorem~\ref{theorem:graph_isomorphism_wc} does not only concern rare worst-case inputs, but should be met on average.

In the theorem it is asserted that $\log{v}/v < p < 1-s^{\Omega(1)}$. The aforementioned lower bound suffices to guarantee that every $G \sim \bbB_{v,p}$ will be connected with high probability, whereas the upper bound is needed to ensure that $\entropy{s}{p} = \Omega(1)$.

\subsubsection{Distinguishing trees}
\label{subsec:tree_isomorphism}

Distinguishing trees (connected acyclic undirected graphs) is significantly easier: 

\begin{theorem}
Suppose that $G_a$ and $G_b$ are sampled independently from the set of all trees on $v$ nodes. Denote by $\bb{T}_{v}$ the resulting distribution. The communication complexity of $f_\text{isom}$ is at least
$$ 
O(v) = c_{f_\text{isom}}^\textit{both} \geq c_{f_\text{isom}}^\textit{both}(\bbT_v) \gtrsim  2v \log_s{\alpha} - 5 \log_s{v} + \log_s{7} = \beta = \Omega(v) 
$$
and 
$O(v) = c_{f_\text{isom}}^\textit{one} \geq c_{f_\text{isom}}^\textit{one}(\bbT_v) \gtrsim  \frac{\beta + \log_{s}2}{2} = \Omega(v),
$  
where $\alpha \approx 2.9557652$ and $f(n) \gtrsim g(n)$ means $f(n) \geq g(n)$ as $n$ grows.
\label{theorem:tree_isomorphism}
\end{theorem}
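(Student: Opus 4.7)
The plan is to reduce the lower bound to an information-theoretic inequality and then to evaluate the relevant entropy via Otter's asymptotic enumeration of unlabeled trees.

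\textbf{Hard instance.} Take $\cX_a = \cX_b$ to be the set of trees on $v$ nodes and let $G_c$ consist of a single edge joining designated vertices on the two sides, so that $\tau=1$ and the joined graph $G$ is itself a tree on $2v$ vertices. Under $\bbT_v$ the two subtrees are drawn i.i.d.\ uniformly from their isomorphism classes. Write $Y = f_\text{isom}(G)$.

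\textbf{Entropy lower bound.} Partition the transcript $\Pi$ into the symbols sent by Alice ($\Pi_A$) and by Bob ($\Pi_B$). Source coding with alphabet $\cS$ gives $\E{|\Pi_B|} \geq H_s(\Pi_B) \geq H_s(\Pi_B \mid X_a)$, and in the ``both'' protocol Alice's decoder implies $H_s(Y\mid X_a) \leq H_s(\Pi_B \mid X_a)$; adding the symmetric inequality for Bob,
\[
\E{|\Pi|} \;\geq\; H_s(Y \mid X_a) + H_s(Y \mid X_b).
\]
The one-way case retains only the single summand $H_s(Y\mid X_b)$.

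\textbf{Reduction to counting.} Independence of $X_a, X_b$ and the fact that $Y$ is determined by $(X_a, X_b)$ give the chain identity $H_s(Y\mid X_b) = H_s(X_a) - H_s(X_a\mid Y, X_b)$. The crux is showing $H_s(X_a\mid Y, X_b) = O(1)$: once the isomorphism class $Y$ is fixed, the distinguished cut edge identifies the gluing vertex of $G_b$ inside $G$, and so $G_a$ is recovered up to the automorphisms of $G$ that fix $G_b$. A classical result of Erdős and Rényi asserts that a uniformly random tree is asymmetric with probability bounded below by a positive constant and its $\log_s|\operatorname{Aut}|$ is $O(1)$ in expectation. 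Consequently $H_s(Y\mid X_b) \geq H_s(X_a) - O(1) = \log_s t(v) - O(1)$, and Otter's asymptotic $t(v) \sim C\alpha^{v}v^{-5/2}$ yields $H_s(Y\mid X_b) \gtrsim v\log_s\alpha - \tfrac{5}{2}\log_s v + O(1)$. Doubling (by symmetry) gives the two-way bound $\gtrsim 2v\log_s\alpha - 5\log_s v + O(1)$, and the single term produces the one-way bound up to the additive $\log_s 2$ in the statement. The worst-case bounds $c_{f_\text{isom}}^\textit{m} \geq c_{f_\text{isom}}^\textit{m}(\bbT_v)$ are immediate since worst case lower bounds expectation.

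\textbf{Main obstacle.} The genuinely hard step is bounding $H_s(X_a\mid Y, X_b)$ by $O(1)$: a pathological instance such as a star joined to a star has $|\operatorname{Aut}(G)|$ of order $((v{-}1)!)^2$, contributing $\Theta(v\log v)$ to the conditional entropy and, if unchecked, annihilating the bound. The argument must therefore use quantitative control on the upper tail of $|\operatorname{Aut}|$ under the uniform tree distribution---exploiting that the expected group size is constant despite heavy-tailed behaviour on a measure-zero subfamily---or restrict the distribution to an asymmetric subfamily of constant positive mass. Matching the explicit constants $-5\log_s v + \log_s 7$ and $+\log_s 2$ further requires non-asymptotic forms of Otter's bound and explicit bookkeeping of the low-order correction terms, rather than only the $\sim$ notation.
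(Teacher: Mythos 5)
You take a genuinely different route, and there is a real gap in it. The paper's argument is considerably simpler: it counts isomorphism classes---by the same pair-counting argument used in the proof of Theorem~\ref{theorem:graph_isomorphism_wc}, the universe $\cX$ has at least $t(v)(t(v)+1)/2$ classes---and then applies Corollary~\ref{cor:max_rectangle_bound} for the worst case, Lemma~\ref{lemma:entropy_lower_bound} (the output-entropy bound $c_{f_\text{isom}}^\textit{both}(\bbG)\geq \min_{G_c}\entropy{s}{f_\text{isom}(G)\mid G_c}$, which for a uniform output is just the logarithm of the class count) for the expected case, and Lemma~\ref{lemma:c_single} for the one-sided bound. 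Your route replaces the output-entropy bound by the sharper internal-information inequality $\E{|\Pi|}\geq \entropy{s}{Y\mid X_a}+\entropy{s}{Y\mid X_b}$, which then forces you to establish $\entropy{s}{X_a\mid Y,X_b}=O(1)$ in order to recover the constant in front of $v$.

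That last step is a genuine gap which you acknowledge, but your diagnosis of the obstacle is off-target. To have $\entropy{s}{X_a}=\log_s t(v)$ you must treat $X_a$ as an unlabeled rooted tree; in that regime $\entropy{s}{X_a\mid Y,X_b}$ is not governed by $|\operatorname{Aut}(G)|$ (a labeled-graph quantity that does not enter the unlabeled count), but by the number of edges of the glued tree $T$ whose removal isolates a rooted $v$-vertex subtree isomorphic to $(T_b,r_b)$. Bounding this in expectation over $\bbT_v$ is a separate random-tree estimate that you have not supplied, and the asymmetry result you cite concerns Erd\H{o}s--R\'{e}nyi random \emph{graphs}, not uniformly random unlabeled trees. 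The paper sidesteps all of this because it only needs that distinct unordered pairs $\{T_a,T_b\}$ yield distinct classes, which is built into the universe construction. A smaller technical issue: the step $\E{|\Pi_B|}\geq \entropy{s}{\Pi_B}$ does not follow from source coding, because the Bob-projection of a transcript is not a prefix code; the correct move is to condition on $X_a$ first (after which $\Pi_B$ \emph{is} a prefix code), giving directly the inequality $\E{|\Pi_B|}\geq \entropy{s}{\Pi_B\mid X_a}$ that you actually use.
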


Akin to the general case, the expected and worst-case complexities match when every tree is sampled with equal probability. Since a distribution over trees cannot be meaningfully parametrized based a connection probability $p$ (trees always have the same number of edges), by default in $\bbT_{v}$ every $G \in \cX$ is sampled with equal probability.

\subsection{Consequences for message-passing neural networks}
\label{subsec:consequences_mpnn}

Two types of networks are distinguished depending on how the readout function operates: 
\begin{enumerate}
\vspace{-1mm}
\item \textsc{ReadOut} performs \textit{majority-voting}. Specifically, for $\net$ to compute $f_{\textit{isom}}(G)$ there should exist a function $r : \cS^{w_d} \to \cY$ and a set of nodes $\cal{M}_G \subseteq \cV$ possibly dependent on $G$ and of cardinality at least $|\cal{M}_G| \geq \mu = O(1)$, such that $ r(x_{i}^{(d)}) = f_{\textit{isom}}(G)$ for every $v_i \in \cal{M}_G$.

\item \textsc{ReadOut} performs \textit{consensus}. This is akin to a majority-voting, with the distinction that $\cal{M}_G$ should contain at least $|\cal{M}_G| \geq n - \mu = \Omega(n)$ nodes.
\end{enumerate}
The implications of a communication complexity bound to MPNN capacity are as follows:  
\begin{lemma}
Let $\bbD$ be a distribution over graphs that is supported on a universe $\cX$ admitting to a $(\cX_a, \cX_b, \tau)$ decomposition.
Further, suppose that $\net$ is an MPNN whose communication capacity is always bounded from above by $c_{\net}$ and is at most $c_{\net}(\bbD)$ in expectation. The following hold:
\begin{enumerate}

\item There exists some $G \in \cX$ for which computing $f_\text{isom}(G)$ necessitates $c_{\net} \geq c_{f_\text{isom}}^{m}$. In addition, for every $\cX' \supset \cX$ network $\net$ cannot compute $f_\text{isom}(G)$ for some $G \in \cX'$. 

\item In expectation, computing $f_\text{isom}$ necessitates $ c_{\net}(\bbD) \geq c_{f_\text{isom}}^{m}(\bbD)$. Moreover, if $ c_{\net} < \delta \, c_{f_\text{isom}}^{m}(\bbD) $ for some $\delta \in [0,1]$, then $\net$ cannot compute $f_\text{isom}(G)$ with probability at least ${(1-\delta)}/{((\beta_m/c_{f_\text{isom}}^m(\bbG)) - \delta)}$.
\end{enumerate}
Above, with majority-voting one should set $\textit{m} = \textit{one}$ and $v > (n - \mu)/2$, whereas with consensus $\textit{m} = \textit{both}$ and $v > \mu$. Further, $\beta_m$ is the worst-case length of a protocol with optimal expected length.
\label{lemma:mpnn-complexity}
\end{lemma}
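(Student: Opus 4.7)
My plan is to simulate the MPNN $\net$ as a two-party communication protocol between Alice (who holds $G_a$) and Bob (who holds $G_b$), thereby reducing statements about the capacity of $\net$ to the communication-complexity lower bounds on $c_{f_\text{isom}}^m$ and $c_{f_\text{isom}}^m(\bbD)$ established in Theorems~\ref{theorem:graph_isomorphism_wc}--\ref{theorem:tree_isomorphism}.

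First I would construct the simulation protocol $\pi_\net$. Alice simulates the updates of $\net$ on nodes $\cV_a$, Bob on $\cV_b$; the remaining nodes of $\cV \setminus (\cV_a \cup \cV_b)$ are assigned to either party by some fixed convention, and the global-state processor $v_0$ is maintained jointly. In layer $\ell$, for each edge crossing the $(\cV_a,\cV_b)$-cut, the parties exchange the corresponding message (at most $\min\{m_\ell, w_\ell\}$ symbols) and the global-state update contributes an additional $\gamma_\ell$ symbols. By Lemma~\ref{lemma:maxflow}, the total number of symbols transmitted by $\pi_\net$ is at most $c_\net$ for every $G \in \cX$ and at most $c_\net(\bbD)$ in expectation when $G \sim \bbD$.

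Next I would verify that $\pi_\net$ is a valid protocol for $f_\text{isom}$ with the appropriate $m$. After the simulation Alice holds $x_i^{(d)}$ for $v_i \in \cV_a$ and Bob holds $x_j^{(d)}$ for $v_j \in \cV_b$. For the \emph{consensus} readout, $|\cal{M}_G| \geq n - \mu$ together with $v > \mu$ yields $|\cV_a \cap \cal{M}_G| \geq v - \mu > 0$ and analogously for $\cV_b$, so each party recovers $f_\text{isom}(G)$ by applying $r$ to a correct-readout node in its possession, giving a valid $m = \textit{both}$ protocol. For \emph{majority voting}, the hypothesis $v > (n-\mu)/2$ gives $|\cV_a \cup \cV_b| = 2v > n - \mu \geq n - |\cal{M}_G|$, so $(\cV_a \cup \cV_b) \cap \cal{M}_G \neq \varnothing$ and at least one party recovers $f_\text{isom}(G)$, giving a valid $m = \textit{one}$ protocol. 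Combining with the symbol count yields $c_{f_\text{isom}}^m \leq c_\net$ and $c_{f_\text{isom}}^m(\bbD) \leq c_\net(\bbD)$. The worst-case assertion in Part 1 follows by contraposition and extends verbatim to every $\cX' \supset \cX$ since the failing graph still lies in $\cX'$, while the first inequality in Part 2 is immediate.

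For the probability bound in Part 2, write $q$ for the probability that $\net$ fails on $G \sim \bbD$. I would form a hybrid protocol that executes $\pi_\net$ on the success set (length $\leq c_\net$) and an optimal-expected protocol $\pi^*$ of worst-case length $\leq \beta_m$ on the failure set, yielding expected length at most $(1-q)\, c_\net + q\, \beta_m$. Since this must be at least $c_{f_\text{isom}}^m(\bbD)$, substituting $c_\net < \delta\, c_{f_\text{isom}}^m(\bbD)$ and rearranging gives $q \geq (1 - \delta)/(\beta_m/c_{f_\text{isom}}^m(\bbD) - \delta)$ as claimed. The main technical obstacle is realising this hybrid as a \emph{legitimate} standard protocol, since neither party knows a priori whether $G$ lies in the success set of $\net$; I expect to handle this by inserting an $O(1)$-symbol verification step after $\pi_\net$ in which the parties cross-check candidate outputs, falling back to $\pi^*$ only upon failure, and then arguing that this overhead does not perturb the leading-order algebra above.
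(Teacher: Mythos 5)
Your simulation argument and the treatment of the readout conditions match the paper's proof closely: simulate the forward pass of $\net$ as a two-party protocol of length bounded by $c_\net$ (worst case) and $c_\net(\bbD)$ (expectation); then observe that $v > \mu$ under consensus forces both $\cV_a$ and $\cV_b$ to contain a node of $\cal{M}_G$ (so $m=\textit{both}$), while $v > (n-\mu)/2$ under majority voting forces at least one of them to (so $m=\textit{one}$). Your derivation of $c_{f_\text{isom}}^m \leq c_\net$, $c_{f_\text{isom}}^m(\bbD) \leq c_\net(\bbD)$, and the extension to $\cX' \supset \cX$ is the same as the paper's.

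The genuine gap is in your proof of the probability bound, and you have correctly suspected it. Your hybrid protocol is not a legitimate communication protocol: the branch "run $\pi_\net$ on the success set, $\pi^*$ on the failure set" requires the parties to know whether $G$ lies in the success set of $\net$, which is a property of the full input $(G_a, G_b, G_c)$ that neither party has. Your proposed fix --- an $O(1)$-symbol verification phase --- cannot repair this. Under majority voting ($m=\textit{one}$) only one party holds a candidate answer, so there is nothing to cross-check against. Under consensus ($m=\textit{both}$), the two parties read their answers off the \emph{same} leaf rectangle of $\pi_\net$, so when $\pi_\net$ is wrong it is wrong \emph{consistently}: both parties would announce the same incorrect label and the verification would (falsely) pass. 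There is no constant-overhead way to detect membership in the success set from within the protocol. The paper avoids the hybrid construction entirely: it fixes $\pi^*$ to be the protocol of optimal expected length, lets $\beta_m$ be its worst-case length, and applies Lemma~\ref{lemma:prob_lower_bound} directly to the random variable $\|\pi^*(G)\|_m$, yielding $\Prob{\|\pi^*(G)\|_m > \delta\, c_{f_\text{isom}}^m(\bbD)} \geq (1-\delta)/((\beta_m/c_{f_\text{isom}}^m(\bbD)) - \delta)$, and reads off the error bound from that event. Your final algebraic expression matches, but you reach it through a construction that does not exist as a protocol; replace the hybrid by the direct Markov-type argument on $\pi^*$.

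A smaller point: even granting the hybrid, your bookkeeping of its length would be $(1-q)\, c_\net + q\,(c_\net + O(1) + \beta_m)$ rather than $(1-q)\, c_\net + q\, \beta_m$, since $\pi_\net$ is executed on every input before the fallback; this does not change the asymptotics but would perturb the stated constant.
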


With Lemma~\ref{lemma:mpnn-complexity} in place, the proof of Theorem~\ref{theorem:main} follows from Theorems~\ref{theorem:graph_isomorphism_wc},~\ref{theorem:graph_isomorphism_random} and~\ref{theorem:tree_isomorphism} by setting $v=n/2$.

\section{Empirical results}
\label{sec:empirical}

This section tests the developed theory on 12 graph and tree isomorphism classification tasks of varying difficulty.
In the 420 neural networks tested, the bounds are found to consistently predict when each network can solve a given task as a function of its capacity.

\subsection{Experimental setting}
MPNN of different capacities were tasked with learning the mapping between a universe of graphs their corresponding isomorphism classes. 

\textit{Datasets.} 
A total of 12 universes were constructed following the theory: $\cX_{\text{graph}}^n$ for $n = (6,8,10,12)$ and $\cX_{\text{tree}}^n$ for $n = (8,10,\ldots,22)$. 
Each $\cX_{\text{graph}}^n$ was built in two steps: First, \texttt{geng}~\citep{McKay201494} was used to populate $\cX_a$ and $\cX_b$ with all possible connected graphs on $v=\myfrac{n}{2}$ nodes. Then, each $G \in \cX_{\text{graph}}^n$ was generated by selecting $G_a$ and $G_b$ from $\cal{X}_a$ and $\cX_b$ and connecting them with an edge, such that $\tau =1$. The labels added to the nodes of $G$ were the one-hot encoding of a random permutation of $(1,\ldots,v)$ and $({v+1}, \ldots, n)$.    
The construction of $\cX_{\text{tree}}^n$ differed only in that $\cX_a$ and $\cX_b$ contained all trees on $v=\myfrac{n}{2}$ nodes. 
Then, the 12 datasets were built by sampling graphs from each respective universe. These were split into a training, a validation, and a test set (covering 90\%, 5\%, and 5\% of the dataset, respectively). Additional details are provided in Appendix~\ref{app:experimental_details}.

\textit{Architecture and training.} 
The networks combined multiple GIN0~\citep{xu2018powerful} layers with batch normalization and a simple sum readout. Their depth and width varied in $d \in (2,3,4,5,6,7,8)$ and $w \in (1,2,4,8,16)$, respectively, the message-size was set equal to $w$, and no global state was used. Each network was trained using Adam with a decaying learning rate. Early stopping was employed when the validation accuracy reached 100\%.

\begin{figure}[t!]
\hspace{-0.05cm}
\begin{subfigure}[b]{0.32\columnwidth}
\includegraphics[width=1.04\columnwidth]{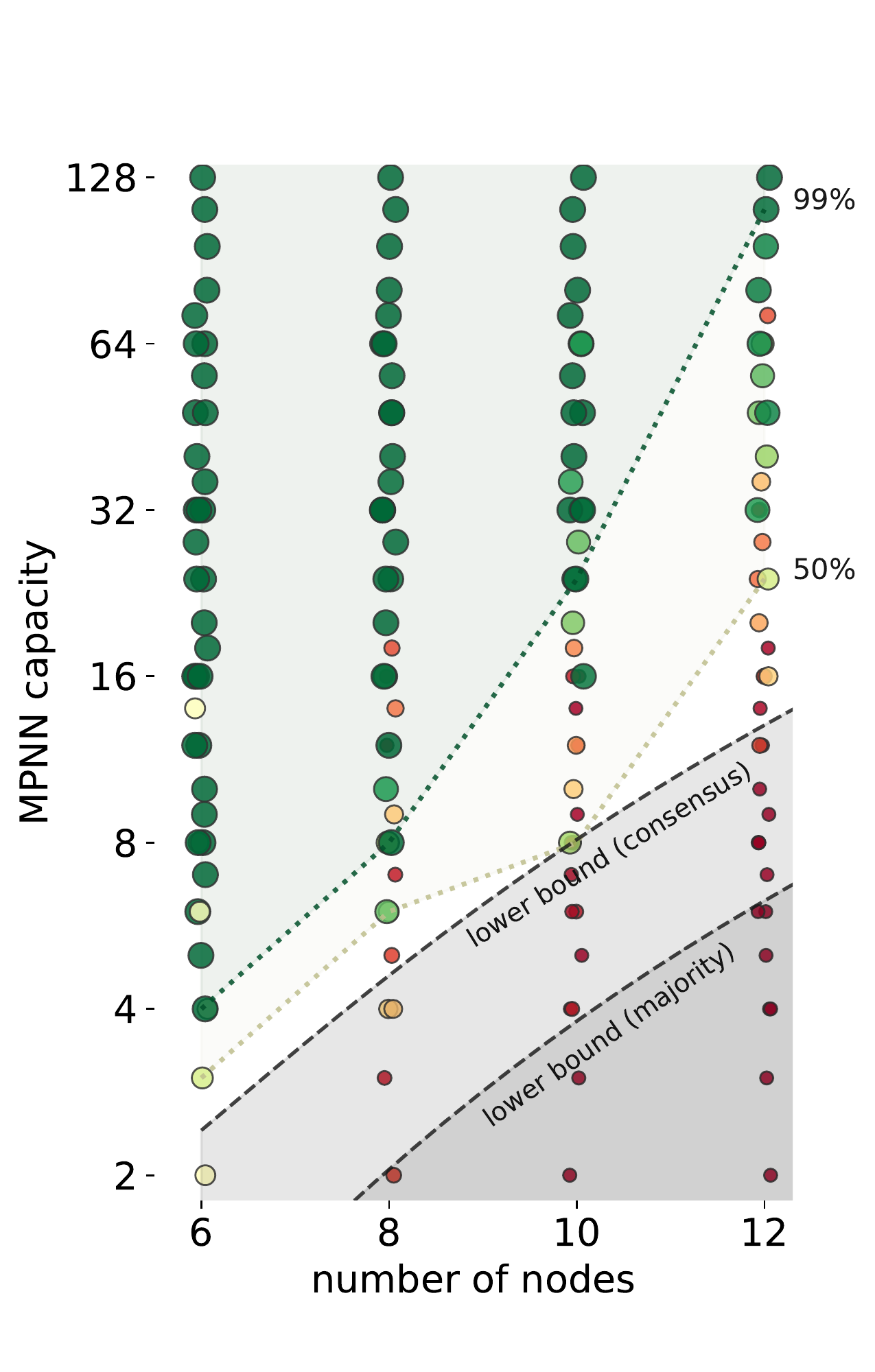}
\vspace{-0.4cm}\caption{distinguishing graphs}
\label{fig:graph_nodes_capacity}
\end{subfigure}
\hfill
\begin{subfigure}[b]{0.60\columnwidth}
\includegraphics[width=1.03\columnwidth,trim=14mm 0 0mm 20mm, clip]{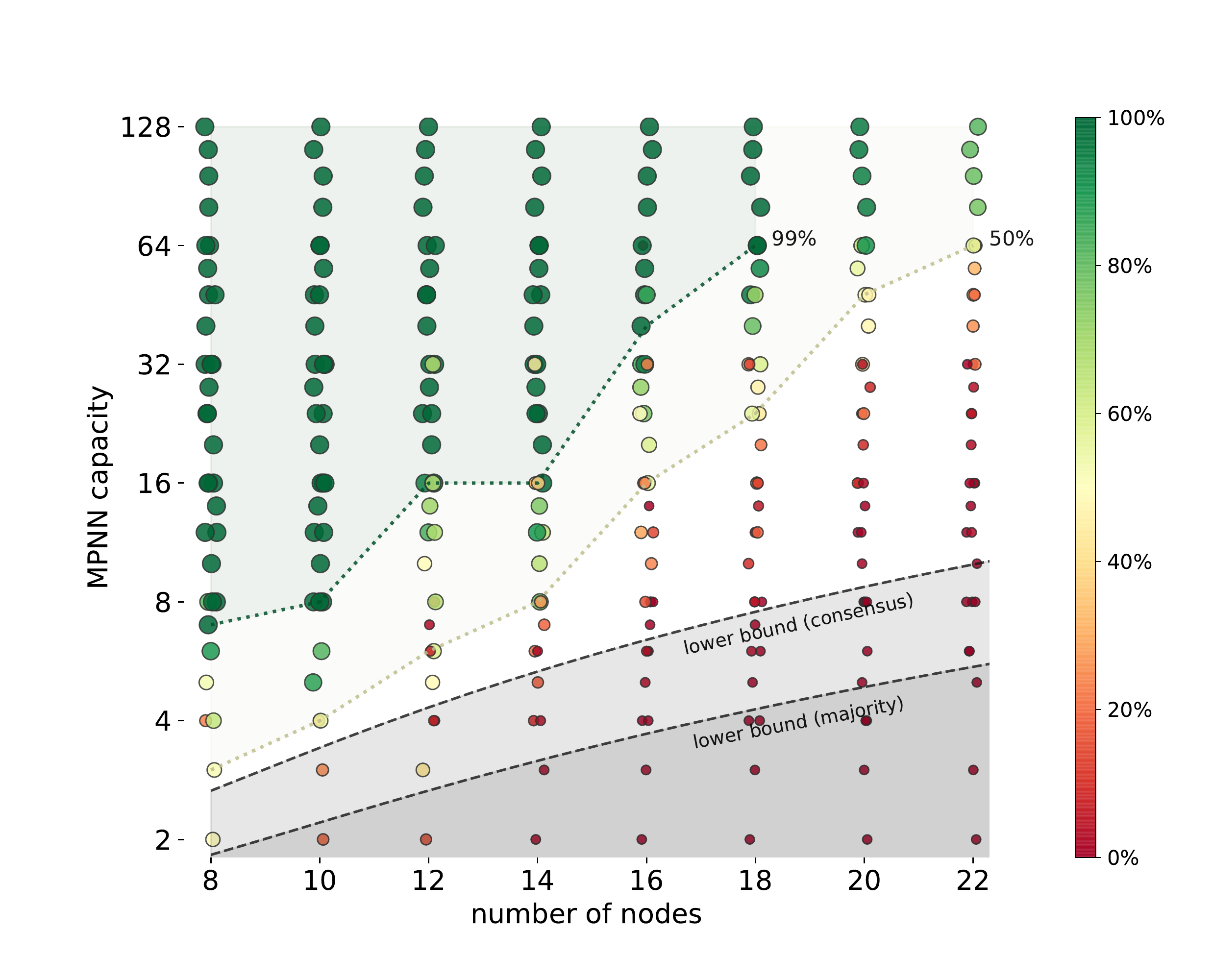}
\vspace{-0.44cm}\caption{distinguishing trees}
\label{fig:tree_nodes_capacity}
\end{subfigure}
\caption{Test accuracy in terms of communication capacity and the number of nodes for 4 graph (left) and 8 tree isomorphism tasks (right). Each marker corresponds to a trained network. Networks of high (low) accuracy as plotted with large green (small red) markers. The two dashed colored lines connect the smallest-capacity networks that attain 50\% and 99\% accuracy, respectively. The two gray regions at the bottom of the figure correspond to the proposed distribution-dependent lower bounds for a majority and consensus readout function. Best seen in color.\vspace{-3mm}}
\end{figure}

\subsection{Findings}

Let me begin by stating that networks of sufficient size could solve nearly every task up to 100\% test accuracy (Table~\ref{tab:same-capacity} in Appendix~\ref{app:experimental_details}), which corroborates previous theoretical findings that non-anonymous MPNN are universal and can solve graph isomorphism~\citep{Loukas2020a,DBLP:journals/corr/abs-1905-12560}, as well as that they can learn to be permutation invariant~\citep{murphy2019relational}. On the other hand, anonymous MPNN are always permutation equivariant but cannot distinguish between graphs of more than three nodes~\cite{chen2020can}.

Figures~\ref{fig:graph_nodes_capacity} and~\ref{fig:tree_nodes_capacity} summarize the neural network performance for all the tasks considered. 
The achieved accuracy strongly correlated with communication capacity (computed based on Lemma~\ref{lemma:maxflow}) with larger-capacity networks performing consistently better. 
Moreover, in qualitative agreement with the analysis, solving a task can be seen to necessitate larger capacity when the number of nodes is increased. A case in point, whereas a capacity of 4 suffices to classify 99\% of graphs of 6 nodes correctly, for 8, 10, and 12 nodes the required capacity increases to 8, 24, and 112, respectively.  
This identified correlation between capacity and accuracy could not be explained by the depth or width of the network alone, as, in most instances, tasks that could not be solved by wide and shallow networks could also not be solved by deep networks of the same capacity. The only exception was when receptive field did not cover the entire graph (see Figures~\ref{fig:graph_exchangable} and \ref{fig:tree_exchangable} in Appendix~\ref{app:experimental_details}). 

The gray regions at the bottom of each figure indicate the proposed expected communication complexity lower bounds. Here, $|\mathcal{S}|=2$ based on the interpretation that each neuron can be either in an activated state or not. 
There are also two lower bounds plotted since a network that sums the final layer's node representations can learn to differentiably approximate both a majority-voting and a consensus function. 
The analysis asserts that a network with capacity below the gray dashed lines should not be able to correctly distinguish input graphs for a significant fraction of all inputs (see precise statement in Lemma~\ref{lemma:mpnn-complexity}).
Indeed, networks in the gray region consistently perform poorly. 
The empirical accuracy appears to match closely the consensus bound, though it remains inconclusive if the network is actually learning to do consensus. 
A closer inspection (see Figures~\ref{fig:graph_all} and~\ref{fig:tree_all} in Appendix~\ref{app:experimental_details}) also reveals that the poor performance of networks in the gray region is not an issue of generalization. In agreement with the theory, networks of insufficient communication capacity do not possess the expressive power to map a fraction of all inputs to the right isomorphism class, irrespective of whether these graphs appear in the training or test set.
%

\section{Conclusion} 

This work proposed a hardness-result for distinguishing graphs in the MPNN model by characterizing the amount of information the nodes can exchange during their forward pass (termed communication capacity). 
From a practical perspective, the results herein provide evidence that, if the amount of training data is not an issue, determining the isomorphism class of graphs is hard but not impossible for MPNN. Specifically, it was argued that the number of parameters needs to increase quadratically with the number of nodes. The implication is that, in the most general case, networks of practical size should be able to solve the problem for graphs with at most a few dozen nodes, but will encounter issues otherwise.

\section*{Broader Impact}

As we rely on neural networks more heavily, we are unfortunately sacrificing some of our ability to understand how our computers solve problems. Our lack of insight hinders us from using our technology to its full potential and can yield mistrust to the public. After all, if we cannot understand what a neural network is (capable of) doing, how can we know whether it is solving the correct problem?
Poor understanding of fundamentals can also lead researchers to misguided optimism, believing that, given the right hyper-parameter tweaking and a large enough training set, neural networks can solve their problem. When incorrect, this mindset can lead to a waste of precious resources, such as time and energy.

In this light, impossibility results, such as those presented in this work, provide an insight into the fundamental limits of neural networks. 
Hardness results for graph neural networks, in particular, characterize the relational pattern recognition ability of practical networks and provide necessary conditions for using our tools to solve classical graph problems. 
The central implication of the results presented in this work is that one cannot expect to learn algorithms that distinguish (even approximately) connected graphs and trees unless the network size grows at-least polynomially with the graph size.

\section*{Acknowledgments}
I would like to express gratitude to the anonymous reviewers, as well as Nathana\"el Perraudin, Nikolaos Karalias and Giovanni Cherubin for their insightful comments. I am also thankful to the Swiss National Science Foundation for financially supporting this work in the context of the project ``\emph{Deep Learning for Graph-Structured Data}'' (grant number PZ00P2 179981).

{
\small
\bibliographystyle{unsrtnat}
\bibliography{references}
}

\appendix

\section{Additional empirical results}
\label{app:experimental_details}

This section presents the empirical results more comprehensively.

First, Table~\ref{tab:dataset_details} provides summary statistics for each of the 12 tasks considered:

\begin{table}[h!]
\centering
\resizebox{\textwidth}{!}{%
\begin{tabular}{@{}rcccccccccccc@{}}
\toprule
\multicolumn{1}{l}{} & $\cX_{\text{graph}}^6$ & $\cX_{\text{graph}}^8$ & $\cX_{\text{graph}}^{10}$ & $\cX_{\text{graph}}^{12}$ & $\cX_{\text{tree}}^{8}$ & $\cX_{\text{tree}}^{10}$ & $\cX_{\text{tree}}^{12}$ & $\cX_{\text{tree}}^{14}$ & $\cX_{\text{tree}}^{16}$ & $\cX_{\text{tree}}^{18}$ & $\cX_{\text{tree}}^{20}$ & $\cX_{\text{tree}}^{22}$ \\ \midrule
classes              & 3                      & 21                     & 231                       & 6328                      & 3                       & 6                        & 21                       & 66                       & 276                      & 1128                     & 5671                     & 22730                    \\
degree (avg.)        & 4.0                    & 4.7                    & 5.4                       & 6.0                       & 3.5                     & 3.6                      & 3.7                      & 3.7                      & 3.8                      & 3.8                      & 3.8                      & 3.8                      \\
diameter (avg.)      & 3.7                    & 4.5                    & 5.0                       & 5.4                       & 4.0                     & 4.3                      & 5.0                      & 5.4                      & 6.0                      & 6.4                      & 6.9                      & 7.3                      \\ 
dataset   size       & 10k                    & 10k                    & 40k                       & 100k                      & 10k                     & 10k                      & 40k                      & 40k                      & 40k                      & 40k                      & 40k                      & 100k                     \\ 
\bottomrule
\end{tabular}%
}
\vspace{2mm}
\caption{Details relevant to the 4 graph and 8 tree isomorphism tasks.}  
\label{tab:dataset_details}
\end{table}

Some example graphs sampled are shown in Figure~\ref{fig:examples}:

\begin{figure}[h!]
\centering
\begin{subfigure}[b]{\examplelength}
\includegraphics[width=\textwidth,trim=0 0 0 0, clip]{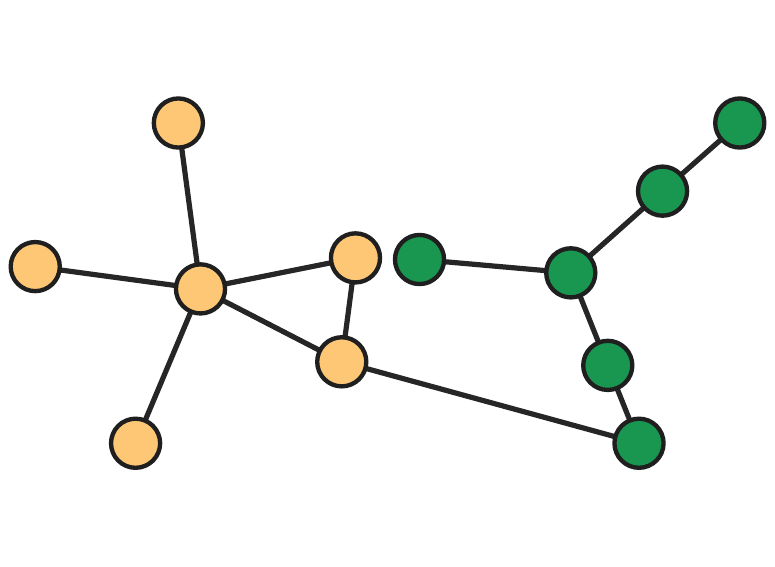}
\end{subfigure}
~
\begin{subfigure}[b]{\examplelength}
   \includegraphics[width=\textwidth,trim=0 0 0 0, clip]{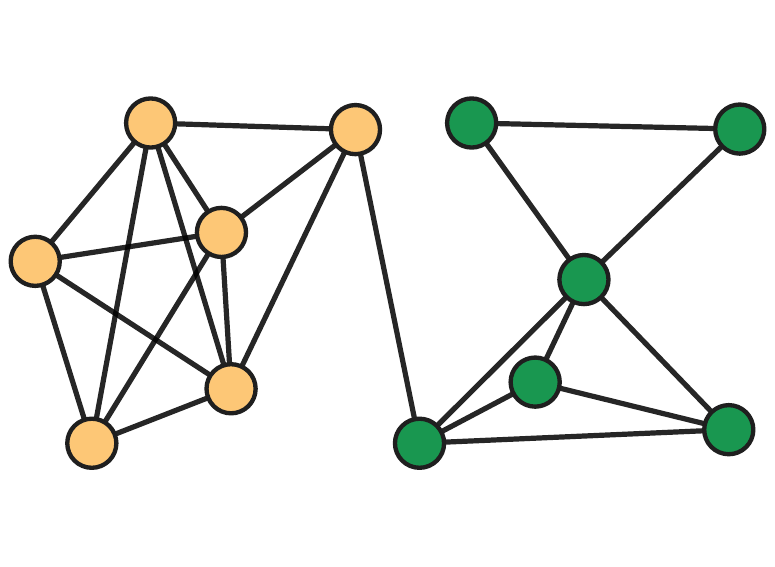}
\end{subfigure}
~
\begin{subfigure}[b]{\examplelength}
\includegraphics[width=\textwidth,trim=0 0 0 0, clip]{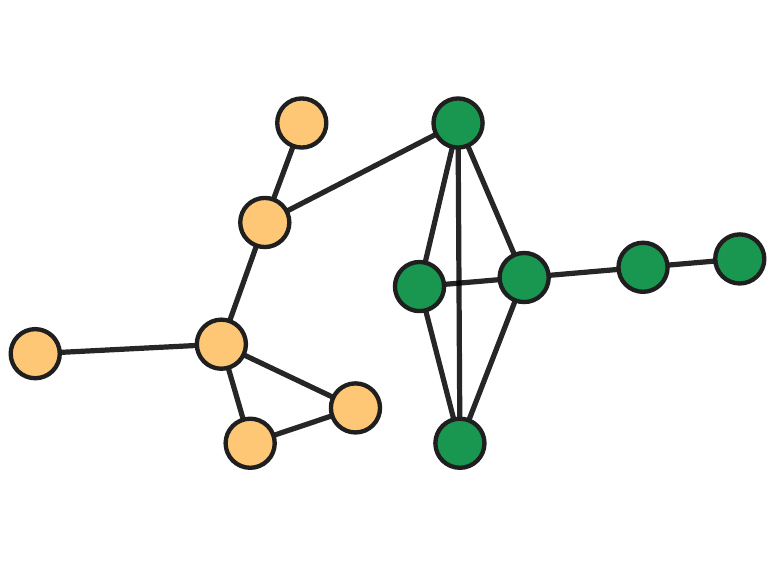}
\end{subfigure}
~
\begin{subfigure}[b]{\examplelength}
    \includegraphics[width=\textwidth,trim=0 0 0 0, clip]{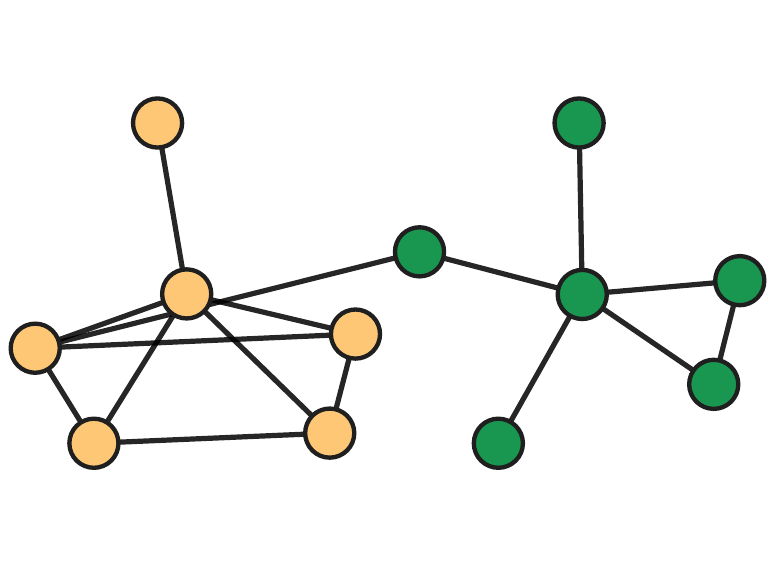}
\end{subfigure}
\\
\begin{subfigure}[b]{\examplelength}
\includegraphics[width=\textwidth,trim=0 0 0 0, clip]{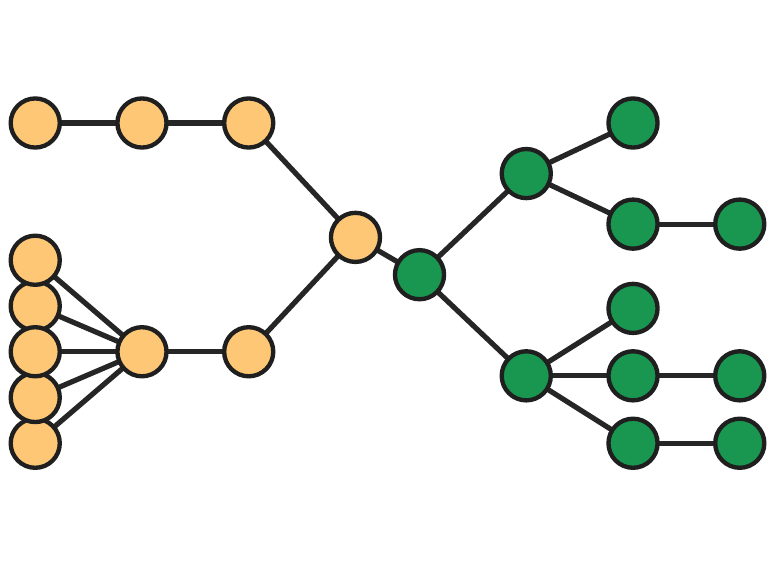}
\end{subfigure}
~
\begin{subfigure}[b]{\examplelength}
    \includegraphics[width=\textwidth,trim=0 0 0 0, clip]{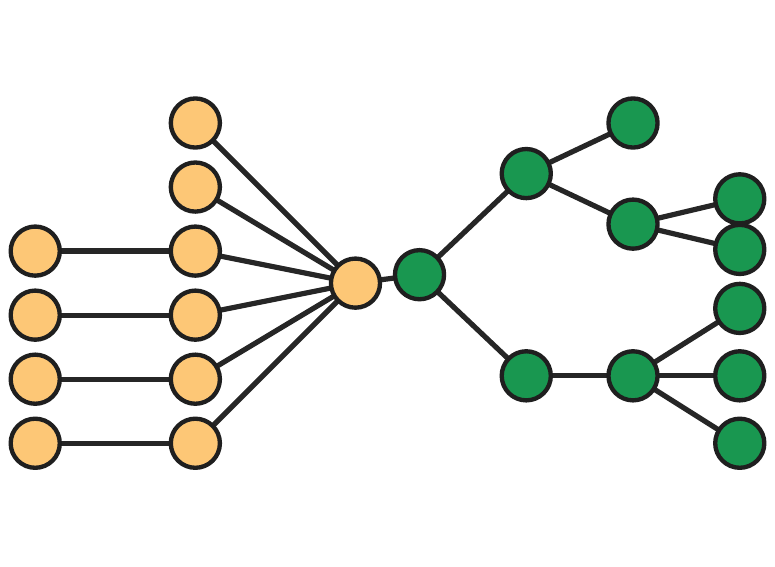}
\end{subfigure}
~
\begin{subfigure}[b]{\examplelength}
\includegraphics[width=\textwidth,trim=0 0 0 0, clip]{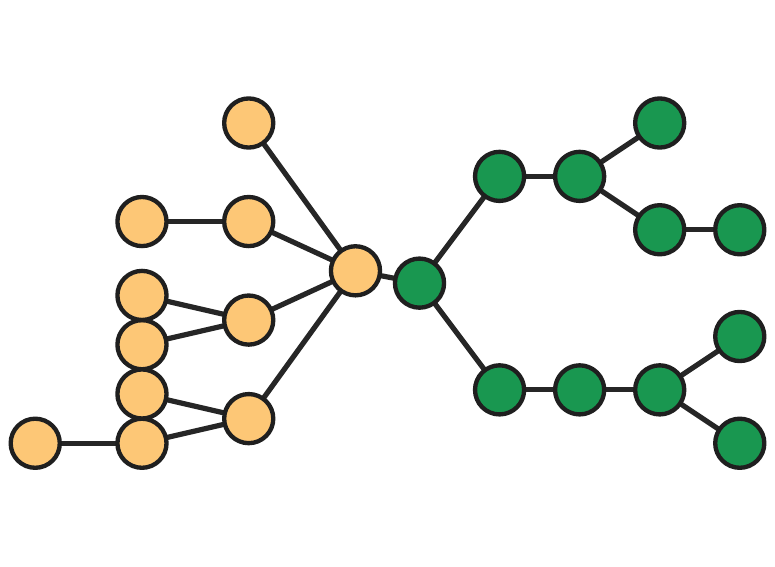}
\end{subfigure}
~
\begin{subfigure}[b]{\examplelength}
    \includegraphics[width=\textwidth,trim=0 0 0 0, clip]{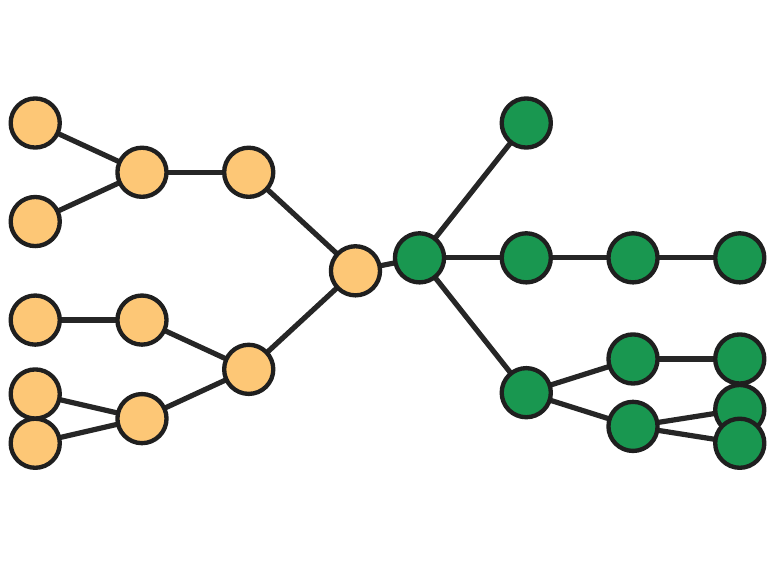}
\end{subfigure}
\caption{Example graphs sampled from two $(\cX_a,\cX_b,1)$ decompositions. Top: $\cX_a$ and $\cX_b$ contain all connected graphs on $v=6$ nodes (special case of Theorems~\ref{theorem:graph_isomorphism_wc} and~\ref{theorem:graph_isomorphism_random}). Bottom: $\cX_a$ and $\cX_b$ contain all trees on 11 nodes (special case of Theorem~\ref{theorem:tree_isomorphism}). In both cases, there exists a $\tau = 1$ cut between the nodes $\cV_a$ controlled by Alice (in yellow) and nodes $\cV_b$ controlled by Bob (in green).}  
\label{fig:examples}
\end{figure}

Table~\ref{tab:same-capacity} provides empirical evidence that, with a one-hot encoding of the node-ordering given as features and a sufficiently large training set, MPNN of sufficient capacity can solve graph isomorphism. In the current experiment, a large network (depth = 10 and width = 32) is seen to solve most isomorphism instances. The network did not achieve perfect classification for larger graphs, but better results can be achieved with more training data.

\begin{table}[h!]
\centering
\resizebox{\textwidth}{!}{%
\begin{tabular}{@{}rcccccccccccc@{}}
\toprule
accuracy   & $\cX_{\text{graph}}^6$ & $\cX_{\text{graph}}^8$ & $\cX_{\text{graph}}^{10}$ & $\cX_{\text{graph}}^{12}$ & $\cX_{\text{tree}}^{8}$ & $\cX_{\text{tree}}^{10}$ & $\cX_{\text{tree}}^{12}$ & $\cX_{\text{tree}}^{14}$ & $\cX_{\text{tree}}^{16}$ & $\cX_{\text{tree}}^{18}$ & $\cX_{\text{tree}}^{20}$ & $\cX_{\text{tree}}^{22}$ \\ \midrule
training   & 100\%                  & 100\%                  & 100\%                     & 99.997\%                  & 100\%                   & 100\%                    & 100\%                    & 100\%                    & 100\%                    & 100\%                    & 100\%                    & 100\%                    \\
validation & 100\%                  & 100\%                  & 100\%                     & 100\%                     & 100\%                   & 100\%                    & 100\%                    & 100\%                    & 100\%                    & 100\%                    & 97.45\%                    & 82.82\%                    \\
test       & 100\%                  & 100\%                  & 100\%                     & 99.96\%                   & 100\%                   & 100\%                    & 100\%                    & 100\%                    & 100\%                    & 100\%                    & 97.35\%                    &  82.92\%                    \\ \bottomrule
\end{tabular}
}
\vspace{2mm}
\caption{The performance of a large-capacity MPNN.}
\label{tab:same-capacity}
\end{table}

The achieved accuracy of all networks considered is shown in Figures~\ref{fig:graph_all} and~\ref{fig:tree_all} for graph and tree isomorphism tasks, respectively. In contrast to the figures of Section~\ref{sec:empirical}, these plots depict the training as well as testing accuracy.  
For the majority of tasks the test and training accuracy is almost identical. Overfitting can be a problem for larger graphs (e.g., trees of at least 20 nodes). The problem can be mitigated by increasing the size of the training set.

\begin{figure}[h!]
\centering
\begin{subfigure}[b]{1\columnwidth}
\includegraphics[width=\textwidth,trim=220 0 200 0, clip]{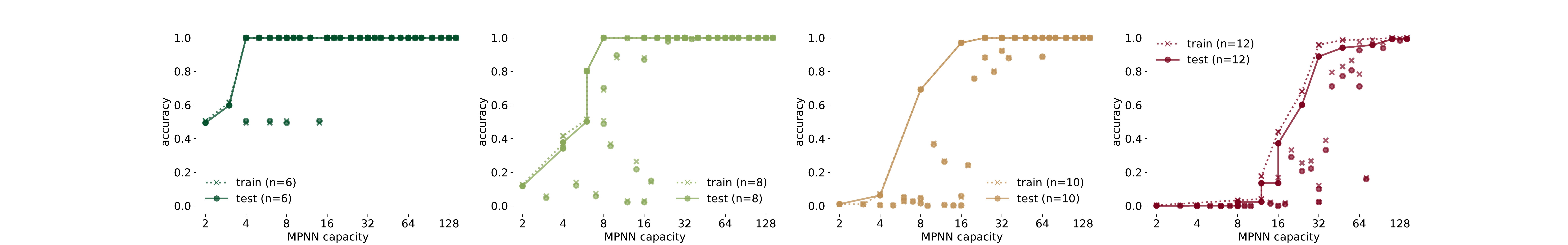}
\vspace{-0.3cm}\caption{graph isomorphism}
\label{fig:graph_all}
\end{subfigure}
\\
\vspace{4mm}
\begin{subfigure}[b]{1\columnwidth}
\includegraphics[width=\textwidth,trim=220 40 200 80, clip]{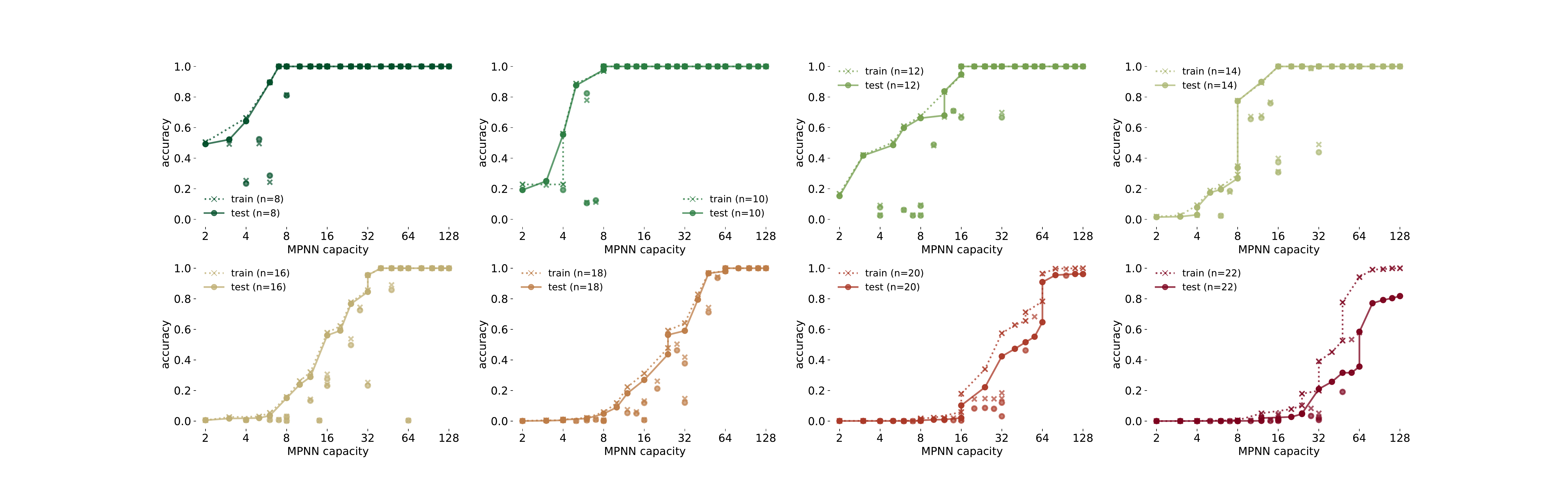}
\vspace{-0.3cm}\caption{tree isomorphism}
\label{fig:tree_all}
\end{subfigure}
\caption{Training and test accuracy as a function of communication capacity.}
\end{figure}

Finally, Figures~\ref{fig:graph_exchangable} and~\ref{fig:tree_exchangable} demonstrate that depth and width are partially exchangeable. This implies that the correlation between capacity and accuracy (see Figures~\ref{fig:graph_nodes_capacity} and~\ref{fig:tree_nodes_capacity}) cannot be explained by only looking at the depth or width of a network. Here, the two figures depict the empirical test accuracy (by the marker color and size) as a function of depth and width for all tasks. For each task, the depth and width have been normalized by the square root of the {critical capacity}, corresponding to the smallest communication capacity of any network that could achieve at least 50\% accuracy. As a consequence of the normalization, all networks in the top-right region  (in white) possess sufficient capacity for the task at hand. Moreover, networks plotted below (above) the main diagonal are deeper than they are wide (wider than they are deep). 
As seen, the classification task can be solved by both wide and deep networks of super-critical capacity, as long as the networks are not too shallow. Indeed, networks of very small depth cannot see the entire graph and thus have poor accuracy.

\begin{figure}[h!]
\centering
\begin{subfigure}[b]{0.48\columnwidth}
\includegraphics[width=0.9\columnwidth,trim=0 0 0 0, clip]{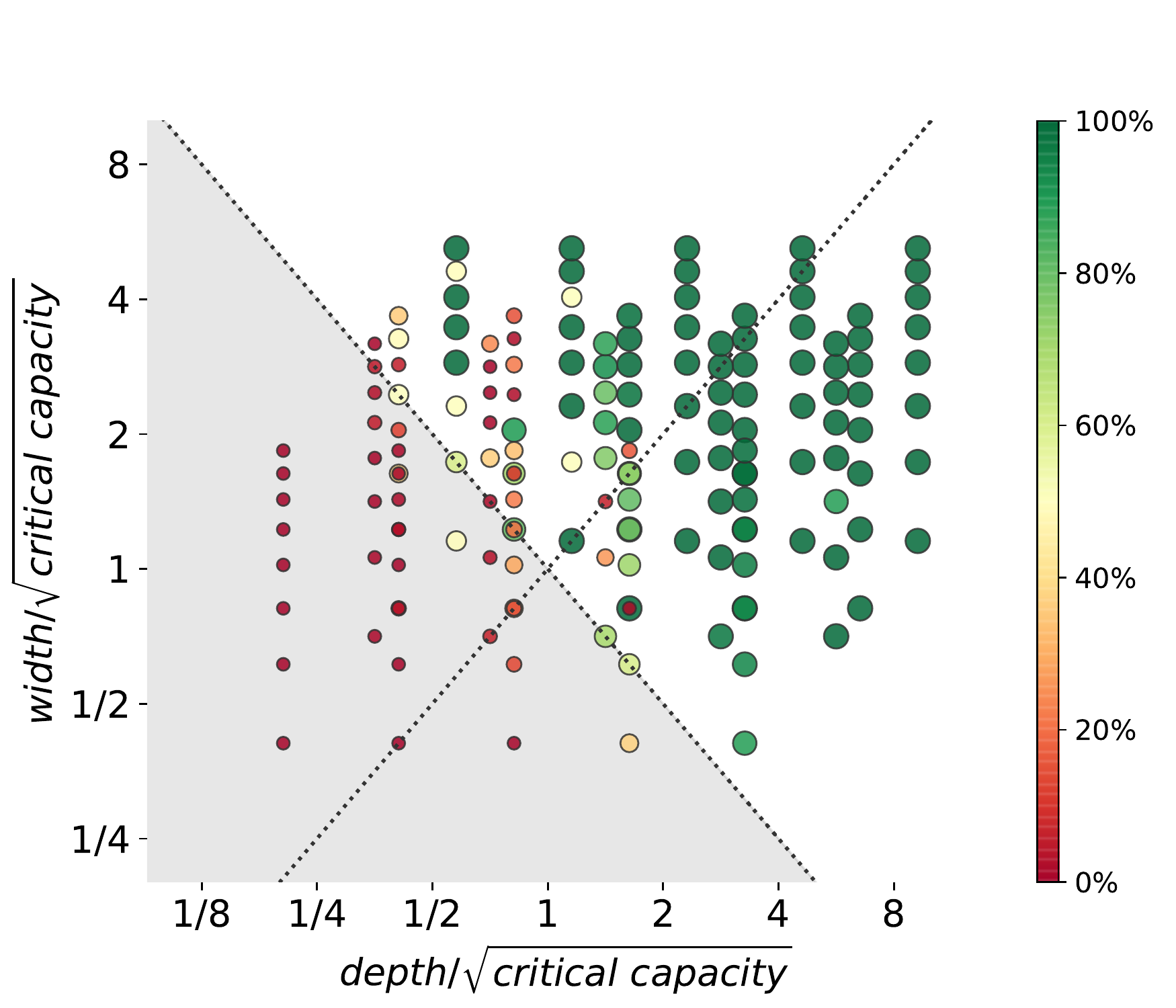}
\vspace{-0.1cm}\caption{graph isomorphism}
\label{fig:graph_exchangable}
\end{subfigure}
\begin{subfigure}[b]{0.48\columnwidth}
\includegraphics[width=0.9\columnwidth,trim=0 0 0 0, clip]{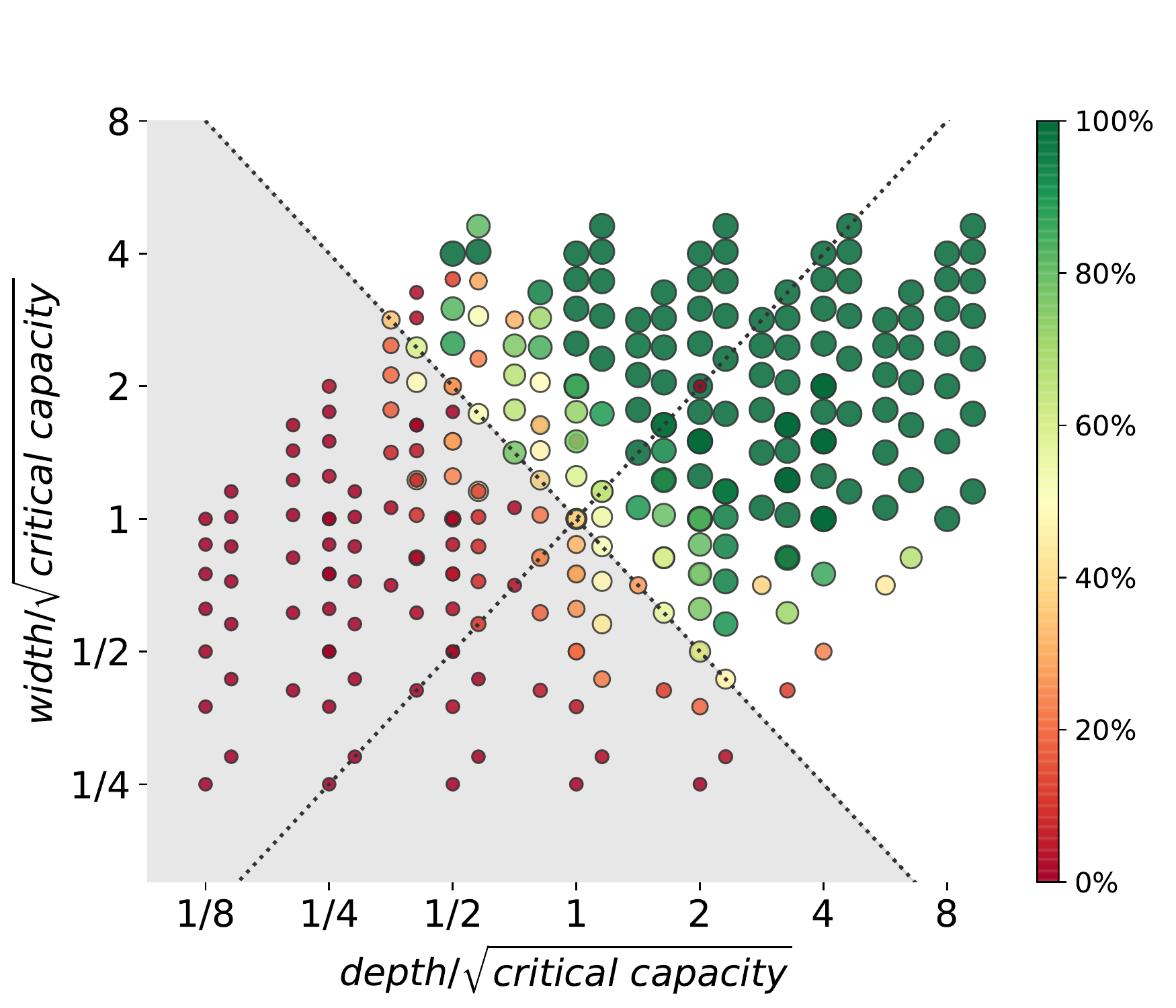}
\vspace{-0.1cm}\caption{tree isomorphism}
\label{fig:tree_exchangable}
\end{subfigure}
\caption{Accuracy as a function of capacity-normalized depth and width. Depth and width are partially exchangeable for graph and tree isomorphism. }
\end{figure}

\section{Communication complexity: basics and beyond}
\label{app:com_complexity}

\subsection{Basic theory: protocols} 

Let us start by denoting by $\cS$ the common set of symbols\footnote{Though usually it is assumed that the parties communicate using binary symbols, i.e., $\cS = \{0,1\}$, the set could also be defined more abstractly to contain $s$ symbols.} Alice and Bob use to communicate and denote by $s = |\cS|$ its cardinality.  A protocol $\pi$ is described in terms of a rooted $s$-ary tree, i.e., a tree with a clearly defined root and in which every internal node has exactly $s$ children. 
In addition, every internal node $i$ is owned by either Alice or Bob and each one of the node's children symbolizes a symbol sent by its owner. Specifically, the protocol associates $i$ with a function $\pi_i$ that maps the input of $i$'s owner to $\cS$ (or equivalently to one of $i$'s children).
The protocol operates as follows: first, both parties set the \textit{current} node to be the root of the tree. Say that the current node is $i$. If the owner of $i$ is Alice then she announces symbol $\pi_i(x)$ and otherwise Bob announces $\pi_i(y)$. Both parties then update the current node to point to the child of $i$ indicated by the value of $\pi_i$. This procedure is repeated until a leaf is found. 
%
\begin{figure}[h!]
\centering
\includegraphics[width=0.25\columnwidth]{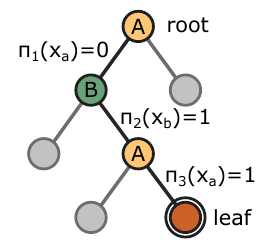}
\caption{The execution of $\pi$ over input $(x_a,x_b)$ is a path within the $s$-ary tree. The decision of which symbol to send is taken by the node's owner (Alice or Bob) as a function of the current path and input. In this example there are $s=2$ symbols $\cS = \{0,1\}$ and the path moves to the left/right child when the symbol 0/1 is sent. The protocol terminates at the leaf and the output is $\pi(x_a,x_b) = ((A,0), (B,1), (A,1))$. }
\end{figure}

By definition, the number of symbols $\|\pi(x_a,x_b)\|_m$ Alice and Bob need to send in order to jointly compute $f(x_a,x_b)$ using protocol $\pi$  equals the length of the path from the root to the leaf $\pi(x_a,x_b)$. Moreover, the number of symbols sent by a protocol in the worst case (i.e., for any input) is at most equal to the depth of the protocol tree (Fact~1.1 in~\citep{rao_behudayoff_2020}).  

\subsection{Basic theory: monochromatic rectangles} 
\label{subsec:basic_theory_monochromatic}

To understand how protocols operate one needs to consider the concept of rectangles. A \textit{rectangle} is a subset of $\cX_a \times \cX_b$ that can be expressed as $\cX_a' \times \cX_b'$ for some $\cX_a' \subset \cX_a$ and $\cX_b' \subset \cX_b$. Intuitively, if one represents $\cX_a \times \cX_b$ as a matrix $\bm{X}$ of size $|\cX_a| \times |\cX_a|$, then a rectangle is any principal submatrix $\bm{X}'$ of $\bm{X}$, i.e., a matrix that contains a subset of rows and columns.

As it turns out, every protocol can be described in terms of rectangles. Let $\cal{R}^i \subseteq \cX_a \times \cX_b$ be the set of inputs leading to a path that crosses a node $i \in \pi$. Moreover, define the following sets: 
\begin{align*}
    \cX_a^i &= \{x \in \cX_a: \exists y \in \cX_b \text{ such that } (x_a,x_b) \in \cal{R}^i \} \\
    \cX_b^i &= \{y \in \cX_b: \exists x \in \cX_a \text{ such that } (x_a,x_b) \in \cal{R}^i \}
\end{align*}
The following result clarifies the connection between protocols and rectangles.
\begin{lemma}[Lemma~1.4 in~\citep{rao_behudayoff_2020}]
For every protocol $\pi$ and node $i$, $\cal{R}^i$ is a rectangle with $\cal{R}^i = \cX_a^i \times \cX_b^i$. Further, the rectangles $\cal{R}^\ell$ given by the leafs $\ell \in \cL_{\pi}$ of the protocol tree form a partition of $\cX_a \times \cX_b$. 
\label{lemma:rectangles}
\end{lemma}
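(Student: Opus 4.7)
The plan is to prove both parts by induction on the structure of the protocol tree. The rectangle property will follow by induction on depth: at each step, the owner of an internal node reads only their own input, so filtering by the announced symbol refines only one factor of the current rectangle. The partition property will then follow from the fact that protocols are deterministic, so every input traces out a unique root-to-leaf path.

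For the rectangle claim, I would begin with the base case at the root $r$, where $\cal{R}^r = \cX_a \times \cX_b$ trivially, since every input reaches the root. For the inductive step, assume $\cal{R}^i = \cX_a^i \times \cX_b^i$ and let $j$ be the child of $i$ corresponding to some symbol $s \in \cS$. An input $(x_a, x_b)$ reaches $j$ if and only if it reaches $i$ and the owner of $i$ announces $s$. If Alice owns $i$, then $\pi_i$ depends only on $x_a$, so
$$
\cal{R}^j = \big(\cX_a^i \cap \pi_i^{-1}(s)\big) \times \cX_b^i,
$$
which is again a rectangle. The case where Bob owns $i$ is symmetric, with the filtering acting on the second factor instead. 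A quick check shows that the identities $\cX_a^j = \cX_a^i \cap \pi_i^{-1}(s)$ and $\cX_b^j = \cX_b^i$ (or vice versa) match the definitions of $\cX_a^j$ and $\cX_b^j$ given in Section~\ref{subsec:basic_theory_monochromatic}, closing the induction.

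For the partition claim, I would argue as follows: given any input $(x_a, x_b)$, the execution of $\pi$ is fully determined, since at every internal node on the current path the owner's function $\pi_i$ selects exactly one child. Hence the path is unique and terminates at exactly one leaf $\ell(x_a, x_b) \in \cL_\pi$. This shows that every input lies in exactly one leaf rectangle, which is precisely the statement that $\{\cal{R}^\ell\}_{\ell \in \cL_\pi}$ partitions $\cX_a \times \cX_b$.

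I do not anticipate any real obstacle here; both claims follow directly by unwinding the definition of a protocol. The one conceptual point worth emphasizing in the write-up is the asymmetric role played by Alice and Bob at each internal node: because only one coordinate of the input is ever read in a single step, the reachable set necessarily factorizes across $\cX_a$ and $\cX_b$, and it is exactly this feature that preserves the rectangle structure throughout the induction.
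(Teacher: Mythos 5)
The paper does not prove this lemma itself; it cites it as Lemma~1.4 of \citet{rao_behudayoff_2020} and moves on, so there is no paper proof to compare against. Your induction on the protocol tree is the standard textbook argument and it is correct: the base case $\cal{R}^r = \cX_a \times \cX_b$ is immediate, the inductive step correctly exploits that $\pi_i$ reads only the owner's coordinate so filtering by the announced symbol refines exactly one factor, and determinism of the protocol yields a unique root-to-leaf path for each input, hence the leaf rectangles partition $\cX_a \times \cX_b$. The only point worth spelling out in a final write-up is the ``quick check'' you mention: once $\cal{R}^j = (\cX_a^i \cap \pi_i^{-1}(s)) \times \cX_b^i$ is established as a product, the identities $\cX_a^j = \cX_a^i \cap \pi_i^{-1}(s)$ and $\cX_b^j = \cX_b^i$ follow because the projections of a nonempty product set recover its factors (and the degenerate case where the product is empty is trivial), which is exactly what the paper's definition of $\cX_a^i, \cX_b^i$ as projections requires.
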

Effectively, at any point in a protocol, a rectangle describes the different possible outputs of $f$ given the messages that have been exchanged. Every new message that the two parties exchange, eliminates some possible outputs, decreasing the size of the rectangle.

With this in place, it is not hard to realize that, for every leaf $\ell \in \cL_{\pi}$, the function $f$ should always take the same value at every $(x_a,x_b) \in \cal{R}^\ell$ in order for both parties to be able to compute the output from $\pi(x_a,x_b)$. Such rectangles are referred to as \textit{monochromatic}: concretely, a rectangle $\cal{R} \subset \cX_a \times \cX_b$ is monochromatic if $f(x_a,x_b) = f(x_a',x_b')$ for every $(x_a,x_b), (x_a',x_b') \in \cal{R}$. Indeed, if leaf rectangles were not monochromatic, Alice and Bob would not be able to identify the output of $f$ based on $\cal{R}^{\ell}$.   

The following theorem is obtained by combining Lemma~\ref{lemma:rectangles} with the fact that the minimum depth of any $s$-ary tree with $s^{c}$ leafs is $c$.     

\begin{theorem}[Theorem 1.6 by~\citet{rao_behudayoff_2020}] If the communication complexity of $f: \cX_a \times \cX_b \to \cY$ is $c_f^\textit{both}$, then $\cX_a \times \cX_b$ can be partitioned into at most $s^{c_f^\textit{both}}$ monochromatic rectangles. 
\end{theorem}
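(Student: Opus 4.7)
The plan is to construct the monochromatic partition directly from an optimal protocol, so the argument reduces to combining the two facts already in place: the rectangle structure of protocol leaves (Lemma~\ref{lemma:rectangles}) and a counting bound on the leaves of an $s$-ary tree. Concretely, I would fix a protocol $\pi$ achieving the communication complexity $c_f^\textit{both}$, so that every input $(x_a,x_b)$ requires at most $c_f^\textit{both}$ symbols of exchange and both parties succeed in identifying $f(x_a,x_b)$ at the end of the execution.

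The first step is to invoke Lemma~\ref{lemma:rectangles}: the sets $\cal{R}^\ell$ indexed by the leaves $\ell \in \cL_{\pi}$ of the protocol tree are rectangles that jointly partition $\cX_a \times \cX_b$. The second step is to argue that each such leaf rectangle is monochromatic. Here the key observation is the one already highlighted in Section~\ref{subsec:basic_theory_monochromatic}: if $(x_a,x_b), (x_a',x_b') \in \cal{R}^\ell$ had $f(x_a,x_b) \neq f(x_a',x_b')$, then the transcript $\pi(x_a,x_b) = \pi(x_a',x_b')$ (both inputs reach the same leaf $\ell$) would force at least one of the two parties to output the wrong value on one of the inputs, contradicting the assumption that $\pi$ successfully computes $f$ for both parties on every input.

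The third step is purely combinatorial: since the protocol sends at most $c_f^\textit{both}$ symbols on any input, the protocol tree has depth at most $c_f^\textit{both}$, and an $s$-ary tree of depth $D$ has at most $s^D$ leaves. Consequently $|\cL_{\pi}| \leq s^{c_f^\textit{both}}$, and the partition produced by the protocol consists of at most $s^{c_f^\textit{both}}$ monochromatic rectangles, which is exactly what the theorem claims.

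I do not anticipate a serious obstacle here: the entire content of the statement is really a careful repackaging of the two preceding facts, and the only small subtlety is making sure that ``both parties succeed'' is used to rule out non-monochromatic leaves (the one-party variant would require a slightly more delicate argument about which party is certain of the output at the leaf). If one wanted to be fully rigorous about edge cases, one might also note that leaves reached by no input can be harmlessly removed or assigned an arbitrary monochromatic extension, but this does not affect the upper bound.
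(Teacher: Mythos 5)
Your proposal is correct and takes essentially the same route as the paper, which obtains the theorem by combining Lemma~\ref{lemma:rectangles} with the observation that an $s$-ary tree of depth $c$ has at most $s^c$ leaves. One thing to tighten in your step two: the contradiction as you phrase it (``$\pi(x_a,x_b)=\pi(x_a',x_b')$ forces at least one party to output the wrong value on one of the two inputs'') does not close on its own, because at the common leaf Alice still distinguishes $x_a$ from $x_a'$ and Bob still distinguishes $x_b$ from $x_b'$, so both parties can consistently output $f(x_a,x_b)$ on the first input and $f(x_a',x_b')$ on the second without any inconsistency. The argument actually needs the rectangle structure of Lemma~\ref{lemma:rectangles} a second time: since $\cal{R}^\ell$ is a rectangle, the mixed corner $(x_a,x_b')$ also reaches leaf $\ell$; there, Alice's output (a function of $x_a$ and $\ell$) must equal $f(x_a,x_b)$, Bob's output (a function of $x_b'$ and $\ell$) must equal $f(x_a',x_b')$, and the both-party correctness requirement at $(x_a,x_b')$ forces both to equal $f(x_a,x_b')$, giving $f(x_a,x_b)=f(x_a',x_b')$. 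This is precisely the step whose delicacy you correctly anticipated for the one-party variant; it is already needed, in this milder form, for the both-party case as well.
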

The following is a direct corollary: 
\begin{corollary}[\citet{rao_behudayoff_2020}]
If $\cX_a \times \cX_b$ cannot be partitioned into $s^c$ monochromatic rectangles, then $c_f^\textit{both} \geq c$. 
\label{cor:max_rectangle_bound}
\end{corollary}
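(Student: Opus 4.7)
The plan is to establish the corollary as an immediate contrapositive of the preceding Theorem~1.6. That theorem asserts that if a function $f$ has worst-case communication complexity $c_f^\textit{both}$, then the domain $\cX_a \times \cX_b$ admits a partition into at most $s^{c_f^\textit{both}}$ monochromatic rectangles. Since the number of rectangles in such a partition is monotone in the depth of the protocol tree (more leaves means more parts), the negation of the conclusion composes cleanly with the hypothesis of the corollary.

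Concretely, I would argue by contradiction. Suppose the hypothesis holds, namely that $\cX_a \times \cX_b$ cannot be partitioned into (at most) $s^{c}$ monochromatic rectangles, and yet $c_f^\textit{both} < c$, i.e., $c_f^\textit{both} \leq c - 1$. Applying Theorem~1.6 to the optimal worst-case protocol then produces a partition of $\cX_a \times \cX_b$ into at most $s^{c_f^\textit{both}} \leq s^{c-1} < s^c$ monochromatic rectangles, which directly contradicts the hypothesis. Hence $c_f^\textit{both} \geq c$, as required.

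Since the argument is a one-line contrapositive of an already-proven statement, there is no substantive obstacle. The only subtlety worth noting is consistency in the reading of ``partitioned into $s^c$'': one must interpret it (as is natural from the way Theorem~1.6 is phrased, with ``at most'') as ``partitioned into at most $s^c$ parts'', so that a partition into strictly fewer parts still falsifies the hypothesis. With that convention fixed, the deduction is automatic and requires no additional combinatorial input beyond the partition-counting bound already provided by Theorem~1.6.
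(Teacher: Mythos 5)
Your proof is correct and matches the paper's approach: the paper presents the statement as a "direct corollary" of the cited Theorem 1.6 without spelling out the argument, and the contrapositive you give is exactly the intended one-line deduction. Your remark about reading "partitioned into $s^c$" as "at most $s^c$" is the right convention and resolves the only possible ambiguity.
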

A simple way to satisfy the requirement of the corollary is to prove that no large monochromatic rectangle exists. For instance, if it is shown that all monochromatic rectangles have size bounded by $k^2$ then every monochromatic partitioning must contain at least $|\cX_a \times \cX_b|/k^2$ rectangles and the complexity is at least $c_f^\textit{both} \geq \log_s{\left( |\cX_a \times \cX_b|/k^2 \right)}$. I will rely on this method in the following to derive lower bounds on the worst-case communication complexity of different functions.

\subsection{A different perspective: expected communication complexity}
\label{subsec:expected_lower_bound}

The following lemma connects the expected communication complexity $\Edist{\bbD}{c_f(\pi)}$ to the entropy of the categorical distribution induced by the leafs of the protocol tree.  

\begin{lemma}
Let the random variables $X = (X_a,X_b) \sim \bbD$ be sampled from some distribution $\bbD$ and, moreover, suppose that the random variable $L_\pi$ is the leaf for a protocol $\pi$ that computes $f(X_a,X_b)$. The expected communication complexity of $f$ is 
$$
    \min_{\pi}  \entropy{s}{L_{\pi}}  \leq c_f^m({\bbD}) \leq \min_{\pi} \entropy{s}{L_{\pi}} + 1,   
$$
where $\entropy{s}{L_{\pi}}$ is the Shannon entropy (base $s$) of $L_{\pi}$ under $\bbD$.
\label{lemma:leaf_entropy_lower_bound}
\end{lemma}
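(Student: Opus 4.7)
The key observation is that every execution of a deterministic protocol traces a path in an $s$-ary tree down to a leaf, so the set of root-to-leaf paths forms a prefix-free code over the alphabet $\cS$. This reduces the lemma to Shannon's source-coding theorem, and I would structure the proof as two independent arguments for the two inequalities.

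For the lower bound, I would fix an arbitrary protocol $\pi$, write $d_\ell$ for the depth of leaf $\ell$ and $p_\ell = \Probdist{\bbD}{L_\pi = \ell}$. Because distinct leaves have distinct root-to-leaf codewords and the tree is $s$-ary, Kraft's inequality gives $\sum_\ell s^{-d_\ell} \leq 1$. Setting $q_\ell \defeq s^{-d_\ell}$ and applying Gibbs' inequality (in base $s$) to the sub-distribution $\{q_\ell\}$ yields
$$
\entropy{s}{L_\pi} \;=\; -\sum_\ell p_\ell \log_s p_\ell \;\leq\; -\sum_\ell p_\ell \log_s q_\ell \;=\; \sum_\ell p_\ell\, d_\ell \;=\; \Edist{\bbD}{\|\pi(X_a,X_b)\|_m}.
$$
Since this holds for every $\pi$, taking the minimum on the right-hand side gives $\min_\pi \entropy{s}{L_\pi} \leq c_f^m(\bbD)$, which is exactly the left inequality.

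For the upper bound, I would start from the protocol $\pi^\dagger$ that minimizes $\entropy{s}{L_\pi}$ and construct a new protocol $\tilde\pi$ that realizes the \emph{same} monochromatic rectangle partition $\{\cal{R}^\ell = \cX_a^\ell \times \cX_b^\ell\}_{\ell \in \cL_{\pi^\dagger}}$ guaranteed by Lemma~\ref{lemma:rectangles}, but with shorter paths assigned to more probable rectangles. Concretely, I would apply Shannon--Fano coding with target codeword lengths $\lceil \log_s(1/p_\ell) \rceil$; by Kraft's inequality these lengths are realizable as leaves of some $s$-ary tree, and the expected depth of this tree under $\{p_\ell\}$ is at most $\entropy{s}{L_{\pi^\dagger}} + 1$. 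Since the rectangles are monochromatic, any such tree yields the correct output of $f$ at every leaf, so if it can be realized as a communication protocol $\tilde\pi$, we obtain $c_f^m(\bbD) \leq \Edist{\bbD}{\|\tilde\pi\|_m} \leq \min_\pi \entropy{s}{L_\pi} + 1$.

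The main obstacle is precisely this realizability step: the Shannon--Fano tree is only a \emph{code} over rectangles, and to be a legitimate protocol every internal node must have an owner whose input alone determines the emitted symbol. I would resolve this by exploiting the product structure $\cal{R}^\ell = \cX_a^\ell \times \cX_b^\ell$: at any node of $\tilde\pi$, the set of rectangles still consistent with the exchange so far is determined by the pair of projections onto $\cX_a$ and $\cX_b$, so Alice (resp. Bob) can always narrow the surviving set by a factor depending only on $x_a$ (resp. $x_b$). By interleaving Alice's and Bob's announcements along the Shannon--Fano descent in the order dictated by which axis the next split refines---and, when a split cuts across both axes, by inserting at most one extra symbol from whichever party disambiguates the product structure---one obtains a valid protocol whose expected length exceeds the Shannon--Fano expectation by at most an additive constant that can be absorbed into the $+1$. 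This last bookkeeping is where the proof is delicate; everything else follows from the standard source-coding argument applied to the protocol tree.
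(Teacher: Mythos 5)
Your lower-bound argument is correct and actually more rigorous than the paper's: you make the Kraft--Gibbs step explicit (prefix-free code over leaves $\Rightarrow \sum_\ell s^{-d_\ell}\le 1$ $\Rightarrow$ $\entropy{s}{L_\pi}\le\sum_\ell p_\ell d_\ell$, then minimize over $\pi$), whereas the paper only gestures at a coding-theory ``analogy.'' This is the same underlying approach, and it is the only half of the lemma the paper ever uses downstream (in Lemma~\ref{lemma:entropy_lower_bound}).

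For the upper bound, you have correctly isolated the real gap: a Shannon--Fano tree over the leaf rectangles is a code but not automatically a protocol, since each internal node must be ownable by a single party whose input alone determines the emitted symbol. The paper's own proof silently conflates ``$s$-ary tree'' with ``protocol tree'' and invokes the source-coding theorem as if they were interchangeable, so you are right to flag this. However, your proposed repair does not work. The claim that interleaving Alice's and Bob's announcements and ``inserting at most one extra symbol when a split cuts across both axes'' costs only an absolute constant is unjustified: whenever the Shannon--Fano split shrinks \emph{both} projections by more than a factor of $s$, neither party can simulate it alone, and the overhead scales with the codeword length rather than being bounded by one. In fact, the upper bound as stated appears to fail in the $m=\textit{both}$ model: take $f(x,y)=x\oplus y$ on $\{0,1\}^n\times\{0,1\}^n$ with $s=2$ and $\bbD$ a point mass at $(0^n,0^n)$. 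Then $\entropy{s}{L_\pi}=0$ for every $\pi$, yet the leaf containing $(0^n,0^n)$ must be the singleton rectangle $\{0^n\}\times\{0^n\}$ (both parties need $A$ and $B$ to be singletons to determine $x\oplus y$), which requires at least one Alice-move and one Bob-move, so $c_f^\textit{both}(\bbD)=2>\min_\pi\entropy{s}{L_\pi}+1$. This means the $+1$ cannot be salvaged by any bookkeeping of the kind you propose. Since only the lower bound is used in Lemma~\ref{lemma:entropy_lower_bound} and Theorem~\ref{theorem:graph_isomorphism_random}, this does not undermine the paper's downstream claims, but the upper-bound half of the lemma should be viewed as an informal heuristic rather than something that follows from the source-coding theorem.
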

\begin{proof}
The expected length of a protocol $\pi$ is  
\begin{align*}
    \Edist{\bbD}{c_f^m(\pi)} 
    &= \sum_{x_a, x_b} \|\pi(x_a,x_b)\|_m \cdot \Prob{X_a=x_a, X_b=x_b} \\
    &= \sum_{\ell \in \cL_{\pi}}  \text{depth}(t) \cdot \Prob{L_{\pi}=\ell} \\
    &= \Edist{\bbD}{\text{depth}(L_{\pi})}.    
\end{align*}
Note that the set $\cL_{\pi}$ contains the leafs of the protocol tree and $L_{\pi}$ is a categorical random variable over leafs with
$$
\Prob{L_{\pi} = \ell} = \sum_{x,y \ : \  \pi(x_a,x_b) = \ell} \Prob{X_a=x_a, X_b=x_b},
$$
which is also equal to the probability $\Prob{(X_a,X_b) \in \cal{R}^\ell}$ that a randomly drawn input belongs to $\cal{R}^\ell$.  

To understand $c_f^m({\bbD})$ it helps to realize the connection between protocols and coding theory: rather than sending information between Alice and Bob, one may think of sending the leafs over a channel by using a codebook. In this analogy, each leaf corresponds to a code and the path from the root of the protocol tree to every internal node at depth $t$ corresponds to code prefix of length $t$. Furthermore, the probability of encountering the leaf is $P(L_{\pi}=t)$ and the depth of the protocol tree for every input $(x_a,x_b) \in \cal{R}^\ell$ is equal to the length of the code required to send the associated symbol. 

From the above it follows that the act of designing a protocol with minimal $c_f^m(\pi)$ is equivalent to finding a tree with minimum expected path length from the root to the leafs. The latter is, in turn, equivalent to minimizing the length of the expected code length for a categorical distribution $L_{\pi}$. Therefore, based on Shannon's source coding theorem we have that 
$$
    \min_{\pi} \entropy{s}{L_{\pi}}  \leq c_f^m({\bbD}) \leq \min_{\pi} \entropy{s}{L_{\pi}} + 1,   
$$
matching the lemma statement.
\end{proof}

\section{Deferred proofs}

\subsection{Proof of Lemma~\ref{lemma:maxflow}}

\begin{figure}[h!]
\centering
\hspace{-11mm}
  \begin{subfigure}[b]{0.45\textwidth}
\includegraphics[width=\textwidth,trim=1 0 1 0, clip]{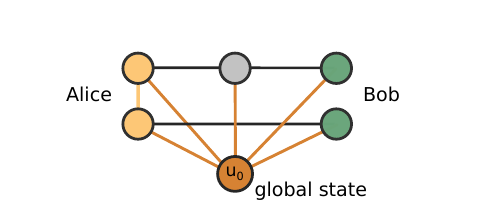}
    \caption{Example partitioning}
    \label{fig:1}
  \end{subfigure}
  \begin{subfigure}[b]{0.45\textwidth}
    \includegraphics[width=\textwidth,trim=1 0 1 0, clip]{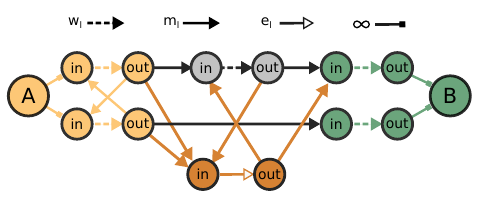}
    \caption{Maximum flow reduction}
    \label{fig:2}
  \end{subfigure}
  \caption{An example of the reduction employed in the proof of Lemma~\ref{lemma:maxflow}. The yellow and green subgraphs correspond respectively to $G_a$ and $G_b$. The global state (external memory) is shown in orange. Each edge is annotated based on its capacity in the maximum flow reduction. }
\end{figure}

The number of symbols that can be transmitted from Alice to Bob in layer $\ell$ is bounded by the maximum flow of the following \textit{multi-source multi-sink maximum flow problem with node capacities}: 
\begin{itemize}
\item The nodes $\cV_a$ are the senders and the nodes $\cV_b$ are the sinks.
\item Each edge has capacity $m_{\ell}$.
\item Each node in $\cV$ has capacity $w_\ell$, whereas $v_0$ has capacity $\gamma_{\ell}$.
\end{itemize}
This problem can be reduced to a simple maximum flow problem (single source single-sink without node capacities) in three steps: 
\begin{enumerate}
\item All nodes in $\cV_a$ (resp. $\cV_b$) are connected to a new node $A$ (resp. $B$) with edges of infinite capacity. 

\item Each node $v_i$ (with the exception of $A,B$ and $v_0$) is split into two nodes $\text{in}_i$ and $\text{out}_i$ connected by an edge of capacity $w_\ell$. Incoming edges to $v_i$ are connected to $\text{in}_i$ and outgoing edges are connected to $\text{out}_i$. 

\item The same splitting procedure is performed for node $v_0$, but now the internal edge has capacity $\gamma_{\ell}$.
\end{enumerate}

Consider the transformed flow network as shown in Figure~\ref{fig:2}. By the max-flow min-cut theorem, the maximum value of the flow is equal to the minimum capacity over all cuts that separate $\cV_a \cup A$ from $\cV_b \cup B$. The latter however can always be bounded by $\text{cut}(A, B) + \gamma_{\ell}$. The first term of this equation gives the weight of the smallest cut separating $A$ and $B$ in the reduced graph, excluding those (orange) edges that touch $v_0$: since the edges from $A$ to $\cV_a$ have infinite capacity (resp. from $B$ to $\cV_b$), every such cut also separates $\cV_a$ and $\cV_b$. Notice also that every path from $A$ to $B$ includes at least one internal edge of capacity $w_{\ell}$ and one normal edge of capacity $m_{\ell}$. Combining the previous observations one finds that $\text{cut}(A, B) \leq \text{cut}(\cV_a, \cV_b) \min\{w_{\ell},  m_{\ell} \}$, where $\text{cut}(\cV_a, \cV_b)$ is the size of the smallest cut that separates $\cV_a$ and $\cV_b$ on $G$ (the undirected and unweighted graph prior to the reduction). The internal edge capacity of $v_0$ in accounted by term $\gamma_{\ell}$.
The final expression is obtained by summing the bound over all $d$ layers.

\subsection{Proof of Theorem~\ref{theorem:graph_isomorphism_wc}}

The proof consists of two main steps. First, the number of monochromatic rectangles of $f_{\text{isom}}$ will be controlled using the number of graph isomorphism classes in $\cX$. Then, invoking Corollary~\ref{cor:max_rectangle_bound} will result in a bound for $c_{f_\text{isom}}^\textit{both}$. Second, the identified lower bound will be translated to a bound regarding $c_{f_\text{isom}}^\textit{one}$ based on Lemma~\ref{lemma:c_single}.

There are $2^{{\binom{v}{2}}}$ labeled graphs on $v$ nodes (i.e., counting orderings), the overwhelming majority of which are connected. The number of connected labeled graphs on $v$ nodes is 
    $$
    |\cX_a| = |\cX_b| = 2^{{\binom{v}{2}}} \left( 1 - \frac{2v}{2^v} + o\left(\frac{1}{2^v} \right) \right) = 2^{{\binom{v}{2}}} \left( 1 - O\left(\frac{v}{2^v}\right) \right) ,
    $$
   which, for sufficiently large $v$, is very close to $2^{{\binom{v}{2}}}$~\citep[p. 138]{flajolet2009analytic}.
Specifically, one may write 
\begin{align*}
\log_2{|\cX_a|} = \log_2{|\cX_b|}
    &= \log_2{ \left( 2^{{\binom{v}{2}}} \left( 1 - O\left(\frac{v}{2^v}\right) \right) \right) } \\
    &= \binom{v}{2} \log_2{2} + \log_2{ \left( 1 - O\left(\frac{v}{2^v}\right) \right)} \\ 
    &\geq \frac{v(v-1)}{2}  - O\left(\frac{v}{2^v}\right) \tag{$\log(1-x)\geq -O(1)x$ for $x = o(1)$} \\
    &= \frac{v(v-1)}{2} + o(1)
\end{align*}
and, similarly, $\log_2{|\cX_a|} = \log_2{|\cX_b|} \leq \frac{v(v-1)}{2}$.
The number of permutations on $v$ nodes is $v!$, which implies that the number $c(v)$ of isomorphism classes of $v$-node graphs is bounded by 
\begin{align}
	\log_{2}{c(v)} 
	&\geq \log_{2}{\left(\frac{|\cX_a|}{v!}\right)} \\
	&= \frac{v(v-1)}{2} - \log_2{(v!)} + o(1)  \\
	&\geq \frac{v(v-1)}{2} - v\log_2{\left(\frac{v}{e}\right)} - \log_2{\left(\sqrt{ve^2}\right)} + o(1) 
	\tag{ since $x! \leq \sqrt{xe^2} \, (x/e)^{x}$} \\
	&= \frac{v^2}{2} - v\log_2{\left(\frac{v\sqrt{2}}{e}\right)} - \log_2{\left(\sqrt{ve^2}\right)} + o(1)	
	\label{eq:graph_isom_wc_gv}
\end{align}
By construction, $\cX$ contains at least $c(v)(1+c(v))/2$ classes. To obtain this bound, one assumes that there do not exist any classes that differ only w.r.t. $G_c$ and then notes that each unique class of $\cX$ may be build either by gluing two distinct or identical classes on $v$ nodes (corresponding to graphs in $\cX_a$ and $\cX_b$). The bound then follows by counting all pairs of elements (there are $c(v)$ of those) with repetitions (e.g., for $\{a,b,c\}$ the set of possible pairs are $\{(aa),(ab),(ac),(bb),(bc),(cc)\}$).

The number of monochromatic rectangles of $f_\text{isom}$ is at least the number of classes and thus Corollary~\ref{cor:max_rectangle_bound} asserts:
\begin{align}
c_{f_\text{isom}}^\textit{both} \log_2{s} 
	&= \log_2{ \left(
    \left\{\begin{array}{c}
        \text{minimum number of} \\
        \text{monochromatic } \\
        \text{rectangles}
      \end{array}
    \right\} \right)} \notag \\
    &\geq \log_2{ \left( \frac{c(v) (c(v)+1)}{2} \right) } \notag \\ 
    &=  2 \log_2{c(v)} + \log_2{\left(1+\frac{1}{c(v)}\right) } - 1 
    \geq 2 \log_2{c(v)} - 1
	\label{eq:graph_isom_wc_both_1}
\end{align} 
Substituting~\eqref{eq:graph_isom_wc_gv} into \eqref{eq:graph_isom_wc_both_1} gives:
\begin{align*}
c_{f_\text{isom}}^\textit{both} \log_2{s}
	&\geq v^2 - 2v\log_2{\left(\frac{v\sqrt{2}}{e}\right)} - 2 \log_2{\left(\sqrt{ve^2}\right)} -1 + o(1) \\
	&= v^2 - 2v\log_2{\left(\frac{v\sqrt{2}}{e}\right)} - \log_2{\left(2 ve^2\right)} + o(1)
\end{align*} 
A bound on $c_{f_\text{isom}}^{\textit{one}}$ can be derived with the help of Lemma~\ref{lemma:c_single}:
\begin{align*}
	c_{f_\text{isom}}^{\textit{one}} \log_2{s}
	&\geq c_{f_\text{isom}}^\textit{both} \log_2{s} - \max_{G_b,G_c} \log_s{\left(|\{ f(G_a,G_b,G_c) \, : \, G_a \in \cX_a \} |\right)} \log_2{s} \\
	&= 2 \log_2{c(v)} - 1  - \log_2{c(v) } \\
	&\geq \frac{v^2}{2} - v\log_2{\left(\frac{v\sqrt{2}}{e}\right)} - \log_2{\left(2e\sqrt{v}\right)} + o(1).	
\end{align*}
This lower bound derivation finishes by factoring $c_{f_\text{isom}}^{\textit{one}}$ as a function of $c_{f_\text{isom}}^{\textit{both}}$.

To conclude the proof, one notes the following elementary upper bound: to compute $f_\text{isom}(G)$,  Bob and Alice can simply send their entire edge-sets to each other and proceed to compute $f(G_a, G_b, G_c)$ independently. Then, since the number of edges of a graph $v$ nodes are $|\cE_a|,|\cE_b| \leq v(v-1)/2$, it suffices to exchange  
$ 
    c_{f_\text{isom}} \leq {v(v-1)}/\log_2{s} = O(v^2)
$  
symbols.

\subsection{Proof of Theorem~\ref{theorem:graph_isomorphism_random}}

I will begin by proving a more general result. Specifically, it will be shown that the expected communication complexity is directly bounded by the entropy of the isomorphism class of a graph sampled from $\bbG$.

\begin{lemma}
The expected number of symbols that Alice and Bob need to exchange to jointly compute the isomorphism class $f_\text{isom}(G)$ of a graph sampled from $G = (G_a,G_b,G_c) \sim \bbG$ is at least
\begin{align*}
    c_{f_\text{isom}}^\textit{both}(\bbG) \geq \min_{G_c} \entropy{s}{f_\text{isom}(G)|G_c}. 
\end{align*}
\label{lemma:entropy_lower_bound}
\end{lemma}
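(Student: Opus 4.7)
The statement is a direct consequence of Lemma~\ref{lemma:leaf_entropy_lower_bound} combined with the fact that $G_c$ is public information shared by Alice and Bob. The plan is to (i) decompose the expected cost by conditioning on $G_c$, (ii) apply the leaf-entropy lower bound separately for each value of $G_c$, (iii) relate the entropy of the leaf random variable to that of the function being computed, and (iv) pass from the average over $G_c$ to the minimum.

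\textbf{Step 1: exploit that $G_c$ is known to both parties.} Fix any protocol $\pi$ achieving $c_{f_\text{isom}}^\textit{both}(\bbG)$. Since both players see $G_c$, their strategies at every internal node may branch on $G_c$; equivalently, $\pi$ decomposes into a family $\{\pi_{g_c}\}$, one protocol per realisation $g_c$ of $G_c$, with $\pi_{g_c}$ being an ordinary two-party protocol on inputs $(G_a,G_b)$ that computes $f_\text{isom}(G_a,G_b,g_c)$. Writing $\bbG_{g_c}$ for the conditional distribution of $(G_a,G_b)$ given $G_c=g_c$, I get
\[
\Edist{\bbG}{c_{f_\text{isom}}^\textit{both}(\pi)} = \sum_{g_c} \Prob{G_c=g_c}\cdot \Edist{\bbG_{g_c}}{c_{f_\text{isom}}^\textit{both}(\pi_{g_c})}.
\]

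\textbf{Step 2: apply the leaf-entropy lower bound per slice.} For each fixed $g_c$, Lemma~\ref{lemma:leaf_entropy_lower_bound} applied to $\pi_{g_c}$ on the distribution $\bbG_{g_c}$ gives
\[
\Edist{\bbG_{g_c}}{c_{f_\text{isom}}^\textit{both}(\pi_{g_c})} \;\geq\; \entropy{s}{L_{\pi_{g_c}}},
\]
where $L_{\pi_{g_c}}$ is the leaf random variable of $\pi_{g_c}$ when $(G_a,G_b)\sim\bbG_{g_c}$.

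\textbf{Step 3: the leaf determines $f_\text{isom}$.} By the monochromatic rectangle property (Lemma~\ref{lemma:rectangles} and the discussion in Appendix~\ref{subsec:basic_theory_monochromatic}), every leaf rectangle of $\pi_{g_c}$ is monochromatic for $f_\text{isom}(\cdot,\cdot,g_c)$, so the value $f_\text{isom}(G)$ is a deterministic function of $L_{\pi_{g_c}}$. Since applying a deterministic map never increases Shannon entropy,
\[
\entropy{s}{L_{\pi_{g_c}}} \;\geq\; \entropy{s}{f_\text{isom}(G)\,\mid\, G_c=g_c}.
\]

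\textbf{Step 4: minimise over $G_c$.} Combining the above and using that an average is at least its minimum,
\[
\Edist{\bbG}{c_{f_\text{isom}}^\textit{both}(\pi)} \;\geq\; \sum_{g_c} \Prob{G_c=g_c}\,\entropy{s}{f_\text{isom}(G)\,\mid\, G_c=g_c} \;\geq\; \min_{g_c}\entropy{s}{f_\text{isom}(G)\,\mid\, G_c=g_c}.
\]
Taking the infimum over protocols $\pi$ on the left-hand side yields the lemma.

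\textbf{Main obstacle.} Nothing here is technically deep, but the one step that requires care is Step 1: one must explicitly argue that sharing $G_c$ lets the protocol tree depend on $G_c$, so that the leaf-entropy bound can be applied conditionally rather than to a single global protocol tree (where the entropy of $L_\pi$ would also include variability due to $G_c$ and could be much larger, yielding a weaker but still valid bound). The data-processing step in Step 3 is clean once one recalls that ``leaves partition the input space into monochromatic rectangles for the function being computed.''
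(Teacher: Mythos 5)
Your proof is correct and follows essentially the same route as the paper's: condition on the public part $G_c$, apply the leaf-entropy lower bound of Lemma~\ref{lemma:leaf_entropy_lower_bound} to the conditional protocol, and reduce the leaf entropy to the entropy of the class via the fact that leaves are monochromatic. The only cosmetic difference is that you invoke the data-processing inequality for Step 3 where the paper invokes its own Lemma~\ref{lemma:entropy_partitioning_ineq}; these are the same fact.
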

\begin{proof}
The first step is to condition the expected communication complexity on $G_c$:  
\begin{align}
    c_{f_\text{isom}}(\bbG) 
    &= \min_{\pi} \Edist{{\bbG}}{c_{f_\text{isom}}(\pi)} \notag \\
    &=  \min_{\pi} \sum_{ G_c }  \Prob{ G_c } \Edist{{\bbG}}{c_{f_\text{isom}}(\pi) | G_c} \tag{due to the law of total expectation} \\
    &=  \min_{\pi} \sum_{ G_c }  \Prob{ G_c }  \Edist{{\bbG}}{c_{f_{c}}(\pi)} \tag{by the definition $f_{c}(\cdot, \cdot) \defeq f_\text{isom}(\cdot, \cdot, G_c) $}  \\
    &\geq  \sum_{ G_c }  \Prob{ G_c }  \min_{\pi}  \Edist{{\bbG}}{c_{f_c}(\pi)} \geq  \min_{G_c} c_{f_{c}}(\bbG). \notag
\end{align}
Denote by $\cL_{\pi}$ the set of leafs of a protocol $\pi$ that computes $f_c$ and by $L_{\pi}$ the random variable induced by the distribution $\bbG$ (for brevity, the conditioning on $G_c$ remains implicit in the following).  We have that
\begin{align}
    \entropy{s}{L_{\pi}} 
    &= \sum_{\ell \in \cL_{\pi}} \Prob{L_{\pi} = \ell} \log_s{ \left(\frac{1}{\Prob{L_{\pi} = \ell}} \right)}.
\end{align}
Upon closer consideration, there are $|\cY|$ types of leafs such that $\cL_{\pi} = \bigcup_{y = 1}^{|\cY|} \cL_{\pi,y}$, with each subset $\cL_{\pi}^l$ containing all leafs for which the protocol outputs the graph isomorphism class $y$. From Lemma~\ref{lemma:entropy_partitioning_ineq} and because $\cL_{\pi,1}, \ldots, \cL_{\pi,|\cY|}$ form a partitioning of $\cL_{\pi}$, we may write:
\begin{align*}
    \entropy{s}{L_{\pi}} 
    &\geq \sum_{y = 1}^{|\cY|} \Prob{L_{\pi} \in \cL_{\pi,y}} \log_s{ \left(\frac{1}{\Prob{L_{\pi} \in \cL_{\pi,y}}} \right)}.    
\end{align*}
The term $\Prob{L_{\pi} \in \cL_{\pi,y}}$ seen above corresponds to the probability that class $y$ will appear in our sample: 
$$
    \Prob{L_{\pi} \in \cL_{\pi,y}} = \Prob{f(G_a,G_b,G_c) = y}
$$
therefore, $\min_{\pi} \entropy{s}{L_{\pi}} \geq \entropy{s}{f(G)|G_c}$ and the claim follows.
\end{proof}

Coming back to the setting of the main theorem, denote by $k_y = |\cE_a| + |\cE_b|$ the number of edges of the graphs in class $y$ (disregarding the edges $\cE_c$). For every $G_c$, we have that 
\begin{align*}
    \Prob{ f_\text{isom}(G) = y \, | \, G_c } = i_c(v) \, p^{k_y} (1-p)^{2\binom{v}{2} - k_y} = i_c(v) \, p^{k_y} (1-p)^{v(v-1) - k_y}.
\end{align*}
Term $i_c(v)$ corresponds to the size of the corresponding isomorphism class. 
Specifically, when $p$ is not too small and $\text{cut}(\cV_a, \cV \setminus \cV_a) = \text{cut}(\cV_b, \cV \setminus \cV_b) = 1$, it can be inferred that each isomorphism class in the universe contains at most $ 2 (v!)^2$ labeled graphs. The remaining $n!-2(v!)^2$ permutations yield isomorphic graphs with cut larger than one. 

\begin{claim}
For any $\delta>0$, $\text{cut}(\cV_a, \cV \setminus \cV_a) = \text{cut}(\cV_b, \cV \setminus \cV_b) = 1$, and $p \geq (\delta + \log_v)/v$, we have $ i_c(v)  \leq 2 (v!)^2$ with probability at least $e^{-2e^{-\delta}} + o(1)$.
\label{claim:i_c_v}
\end{claim}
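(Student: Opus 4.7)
The plan is to show that whenever both $G_a$ and $G_b$ are connected --- an event whose probability is governed by the Erd\H{o}s--R\'enyi connectivity threshold --- the isomorphism class of $G$ inside $\cX$ contains at most $2(v!)^2$ labeled graphs for purely structural reasons. Since every graph in $\cX$ isomorphic to $G$ arises as $\pi(G)$ for some permutation $\pi$ of the $2v$ node labels, it is enough to show that, conditional on both $G_a, G_b$ being connected, the only permutations $\pi$ with $\pi(G) \in \cX$ are the at most $2(v!)^2$ \emph{canonical} ones: those mapping the unordered pair $\{\cV_a, \cV_b\}$ setwise to $\{\{1,\ldots,v\}, \{v+1,\ldots,2v\}\}$.

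For the structural argument I would fix any such $\pi$ and set $A = \pi^{-1}(\{1,\ldots,v\})$, $B = \pi^{-1}(\{v+1,\ldots,2v\})$. Since graphs in $\cX_a, \cX_b$ are connected and $\tau = 1$, the induced subgraphs $G[A]$ and $G[B]$ must both be connected and the $(A,B)$-cut in $G$ must have size at most one. But $G$ itself is connected (being $G_a$ and $G_b$ joined by the single cross edge $e^* \in \cE_c$), so this cut is exactly one edge, say $e$; connectedness of $G[A]$ and $G[B]$ then forces $G - e$ to split into components $A$ and $B$, each of size $v$. A short case analysis pins down $e = e^*$: if instead $e \in \cE_a$ (the case $e \in \cE_b$ is symmetric), then either $G_a - e$ is still connected --- so $G - e$ remains connected through $e^*$, contradicting the split --- or $e$ is a bridge of $G_a$ splitting $\cV_a$ into parts of sizes $k$ and $v - k$ with $0 < k < v$, yielding components of sizes $k + v$ (the part of $\cV_a$ containing the $\cV_a$-endpoint of $e^*$, joined to all of $\cV_b$) and $v - k$ in $G - e$, neither equal to $v$. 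Hence $e = e^*$, which forces $\{A, B\} = \{\cV_a, \cV_b\}$ and makes $\pi$ canonical; counting canonical permutations yields $i_c(v) \leq 2(v!)^2$.

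For the probability estimate I would invoke the classical Erd\H{o}s--R\'enyi theorem stating that for $p = (\log v + c)/v$ the probability that $G_{v,p}$ is connected tends to $e^{-e^{-c}}$ as $v \to \infty$. By monotonicity of connectivity in $p$, for $p \geq (\log v + \delta)/v$ each of $G_a, G_b$ is connected with probability at least $e^{-e^{-\delta}} + o(1)$, and independence under $\bbG_{v,p}$ gives joint connectivity --- and therefore $i_c(v) \leq 2(v!)^2$ --- with probability at least $e^{-2 e^{-\delta}} + o(1)$, exactly the bound stated.

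The main obstacle lies in the structural step, specifically in verifying that no edge other than $e^*$ can serve as the sole $(A,B)$-cut edge producing two size-$v$ components; the size count above handles this cleanly but leans on both $\text{cut}(\cV_a, \cV_b) = 1$ and simultaneous connectedness of $G_a$ and $G_b$. The probability step is then a routine application of a classical threshold result, needing only monotonicity of connectivity in $p$ to avoid tying $p$ to the exact threshold.
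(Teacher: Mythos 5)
Your proof is correct and follows essentially the same route as the paper's: condition on both $G_a$ and $G_b$ being connected (handled by the Erd\H{o}s--R\'enyi connectivity threshold with the stated probability), then show that only the $2(v!)^2$ permutations respecting the bipartition $\{\cV_a,\cV_b\}$ can carry $G$ to another graph in $\cX$. The structural step is argued a bit differently --- the paper lower-bounds the cut of any non-canonical size-$v$ set by $2$ via one forced cut edge inside each of $G_a$ and $G_b$, while you identify the unique bridge that splits $G$ into two size-$v$ components and pin it down to $e^*$ by cases --- and the one small repair needed is that connectedness of $G[A]$ and $G[B]$ should be deduced from $G$ being connected together with the size-one $(A,B)$-cut, rather than from $\cX_a,\cX_b$ containing only connected graphs (in Theorem~\ref{theorem:graph_isomorphism_random} they contain all labeled graphs on $v$ nodes), but this is cosmetic and the conclusion stands.
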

\begin{proof}
To see this consider a labeled graph $G \in \cX$ and let $G' = (\cV', \cE')$ be a second labeled graph that is isomorphic to $G$, induced by a the label permutation $\cV' = (\Pi(u) : u \in \cV)$. I claim that, if there exist $ v_i,v_j \in \cV_a$ for which $\Pi(v_i) \in \cV_a$ and $\Pi(v_j) \in \cV_b$, then $G' \notin \cX$ (and the same holds if there exist $ v_i,v_j \in \cV_b$ for which $\Pi(v_i) \in \cV_b$ and $\Pi(v_j) \in \cV_b$).  

The claim is proven by contradiction: suppose (for now) that $G_a$ and $G_b$ are connected. Then, for every set $\cS$ of cardinality $v$ that is a strict subset of \textit{both} $\cV_a$ and $\cV_b$ ($\cS$ corresponds to the nodes with labels $(1, \cdots, v)$ in $G'$) the cut between $\cS$ and its complement must be $\text{cut}(\cS, \cV \setminus \cS) = \sum_{v_i,v_j} \{ v_i \in \cS \text{ and } v_j \notin \cS \} = \sum_{v_i,v_j} \{ v_i \in \cS \text{ and } v_j \in ( \cV_a \setminus \cS) \} + \sum_{v_i,v_j} \{ v_i \in \cS \text{ and } v_j \in ( \cV_b \setminus \cS) \}\geq 1 + 1$. The latter, however, is impossible as we have assumed that $\forall G' \in \cX$, we must have $\text{cut}(\cV_a', \cV' \setminus \cV_a') = \text{cut}(\cV_b', \cV' \setminus \cV_b') = 1$. Therefore, the only valid permutations $\Pi$ are those that abide to either (a) if $v_i \in \cV_a \rightarrow  \Pi(v_i) \in \cV_a$ and if $v_i \in \cV_b \rightarrow  \Pi(v_i) \in \cV_b$ (there are $(v!)^2$ such permutations), or (b) if $v_i \in \cV_a \rightarrow  \Pi(v_i) \in \cV_b$ and if $v_i \in \cV_a \rightarrow  \Pi(v_i) \in \cV_a$ (there are $(v!)^2$ such permutations).

In the studied distribution, there is a non-zero probability that a disconnected graph appears. However, the probability is exponentially small when $p>{\log{v}}/{v}$. It is well known (see e.g., Theorem 4.1 by~\citet{frieze2016introduction}) that, for any $\delta > 0$ and $p = \frac{\delta + \log{v}}{v} $, a random graph on $v$ nodes is connected with probability
$$
    \Prob{G_a \text{ is connected}} = \Prob{G_b \text{ is connected}} = e^{-e^{-\delta}} + o(1) 
$$
and, by independence, 
$
    \Prob{G \text{ is connected}} = e^{-2e^{-\delta}} + o(1). 
$
\end{proof}

Based on the above observation, the conditional entropy of $f(G)$ can be rewritten as  
\begin{align*}
    \entropy{2}{f_\text{isom}(G) | G_c} 
	&= \sum_{y \in \cY}  \Prob{ f_\text{isom}(G) = y | G_c } \log_2{\left( \frac{1}{\Prob{ f_\text{isom}(G) = y | G_c }}\right) } \\ 
    &\hspace{-21mm}\geq \sum_{k = 0}^{v(v-1)} \frac{\binom{v(v-1)}{k}}{i_c(v)} i_c(v) \, p^{k} (1-p)^{v(v-1) - k_y} \log_2{ \left(\frac{1}{i_c(v) \, p^{k} (1-p)^{v(v-1) - k}} \right)} \\
    &\hspace{-21mm}= \sum_{k = 0}^{v(v-1)} \binom{v(v-1)}{k}  p^{k} (1-p)^{v(v-1) - k} \left( -\log_2{i_c(v)} + v(v-1)\log_2{\left( \frac{1}{1-p}\right)} + k \log_2{ \left(\frac{1-p}{p} \right)} \right) \\
    &\hspace{-21mm}= \log_2{ \left(\frac{1-p}{p} \right)} \left( \sum_{k = 0}^{v(v-1)} \binom{v(v-1)}{k}  p^{k} (1-p)^{v(v-1) - k} k  \right) +  v(v-1)\log_2{\left( \frac{1}{1-p}\right)} -\log_2{i_c(v)} 
\end{align*}
Let $B$ be a binomial random variable with parameters $v(v-1)$ and $p$. The summation term is equivalent to the expectation of $B$:
\begin{align*}
    \sum_{m = 0}^{v(v-1)} \binom{v(v-1)}{k}  p^{k} (1-p)^{v(v-1) - k} k = \E{B} = v(v-1)p
\end{align*}
and, therefore,
\begin{align*}
    \entropy{2}{L_{\pi}} 
    &\geq \log_2{ \left(\frac{1-p}{p} \right)} v(v-1)p +  v(v-1)\log_2{\left( \frac{1}{1-p}\right)} -\log_2{i_c(v)} \\
    &= v(v-1) \entropy{2}{p} -\log_2{i_c(v)}  \tag{by definition $\entropy{2}{p} =   \log_2{ \left(\frac{1-p}{p} \right)} p + \log_2{\left( \frac{1}{1-p}\right)} $} \\
    &= v(v-1) \entropy{2}{p} - 2 \log_2{v!} -1  \tag{see Claim~\ref{claim:i_c_v} $i_c(v) \leq 2 (v!)^2$ } \\
    &\geq v(v-1) \entropy{2}{p} - 2 \left( v \log_2{\left(\frac{v}{e} \right)}  + \frac{1}{2}\log_2{\left( ve^2 \right) } \right) - 1 \tag{ since $x! \leq \sqrt{xe^2} \, (x/e)^{x}$} \\
    &= v^2 \, \entropy{2}{p} - v \left( 2 \log_2{\left(\frac{v}{e} \right)} + \entropy{2}{p} \right) - \log_2{\left(2ve^2\right) }  
\end{align*}
Invoking Lemma~\ref{lemma:entropy_lower_bound}, one obtains: 
\begin{align}
    c_{f_\text{isom}}^\textit{both}(\bbB_{v,p}) 
    &\geq \min_{G_c} \frac{\entropy{2}{f_\text{isom}(G) | G_c} }{\log_2{s}} \notag \\
    &\geq v^2 \, \entropy{s}{p} - v \left( 2 \log_s{\left(\frac{v}{e} \right)} + \entropy{s}{p}\right) - \log_s{\left(2ve^2\right) } = \beta 
\end{align}
Then Lemma~\ref{lemma:c_single} gives:
\begin{align*}
	c_{f_\text{isom}}^{\textit{one}}(\bbB_{v,p})  \log_2{s}
	&\geq c_{f_\text{isom}}^\textit{both}(\bbB_{v,p})  \log_2{s} - \max_{G_b,G_c} \log_s{\left(|\{ f_\text{isom}(G_a,G_b,G_c) \, : \, G_a \in \cX_a \} |\right)} \log_2{s} \\
	&= c_{f_\text{isom}}^\textit{both}(\bbB_{v,p})  \log_2{s}  - \log_2{\left( \frac{|\cX_a|}{v!} \right)} \\	
	&= c_{f_\text{isom}}^\textit{both}(\bbB_{v,p})  \log_2{s} - \frac{v(v-1)}{2}  + \log_2{\left(v!\right)} \\	
	&\geq v(v-1) \left(\entropy{2}{p}-\frac{1}{2}\right) - \left( v \log_2{\left(\frac{v}{e} \right)}  + \frac{1}{2}\log_2{\left( ve^2 \right) } \right) - 1 \\
	&= v^2 \entropy{2}{p} - \frac{v}{2} \left( 2 \log_2{\left(\frac{v}{e} \right)} + \entropy{2}{p}\right)  - \frac{1}{2}\log_2{\left( 2ve^2 \right) } - \frac{v^2 - v + v \entropy{2}{p} + 1}{2} \\
	&= \frac{\beta \log_2{s}  - v^2 + v(1-\entropy{2}{p}) - 1}{2}
\end{align*}
implying  
$
c_{f_\text{isom}}^{\textit{one}}(\bbB_{v,p}) \geq \frac{ \beta}{2} - \frac{v^2 - v(1-\entropy{2}{p}) + 1}{2 \log_2{s}}. 
$
 
\subsection{Proof of Theorem~\ref{theorem:tree_isomorphism}}

According to \citet{otter1948number}, the number of unlabeled trees on $v$ nodes grows like 
$$
	t(v) \sim c \, \alpha^v \, v^{-5/2},
$$ where the values $c$ and $\alpha$ known to be approximately 0.5349496 and 2.9557652 (sequence A051491 in the OEIS). 
Moreover, it was shown in the proof of Theorem~\ref{theorem:graph_isomorphism_wc}, the number of monochromatic rectangles is at least $(t(v)+1) \, t(v)/2$.

Corollary~\ref{cor:max_rectangle_bound} then implies 
\begin{align*}
c_{f_\text{isom}}^\textit{both}
    &\geq \log_s{ \left( \frac{(t(v)+1) \, t(v)}{2} \right) } \\
	&\geq \log_s{ \left( \frac{t(v)^2}{2} \right)} \\
    &\sim 2 \log_s{ \left( \alpha^v \, v^{-5/2} \right) } - \log_s{(c^2/2)} 
	\sim 2v \log_s{\alpha} - 5 \log_s{v} + \log_s{7} = \beta
\end{align*} 
Further, from Lemma~\ref{lemma:c_single} one can derive:
\begin{align*}
	c_{f_\text{isom}}^\textit{one} 
	&\geq c_{f_\text{isom}}^\textit{both} - \max_{G_b,G_c} \log_s{\left(|\{ f(G_a,G_b,G_c) \, : \, G_a \in \cX_a \} |\right)} \\
	&= c_{f_\text{isom}}^\textit{both} - \log_s{t(v)} \\
	&\sim \log_s{ \left( \alpha^v \, v^{-5/2} \right) } - \log_s{(c/2)}	
 	\sim v \log_s{\alpha} - \frac{5}{2} \log_s{v} + \frac{1}{2}\log_s{14} 
\end{align*}
implying
$
c_{f_\text{isom}}^{\textit{one}} \geq \frac{ \beta + \log_{s}{2}}{2}. 
$

Let me now consider the case that $G$ is sampled uniformly at random from the set of all trees in $\cX$. It is a consequence of Lemma~\ref{lemma:entropy_lower_bound} that when the graph $(G_a,G_b,G_c) \sim \bbG$ (conditioned on $G_c$) is sampled uniformly at random from a collection of isomorphism classes, the expected communication complexity is at least  
$$ 
c_{f_\text{isom}}^\textit{both}(\mathbb{T}_{v}) \geq \min_{G_c} \log_s { | \{ f(G_a,G_b,G_c) \ : \ G \in \cX \text{ s.t. } G_c \} |}. 
$$ 
This can be seed to be identical to the worst-case bound encountered above. The derivation thus can be carried out analogously (and the same holds for $c_{f_\text{isom}}^\textit{one}(\mathbb{T}_{v})$ by Lemma~\ref{lemma:c_single}).

Finally, the upper bound $O(v)$ follows by the same argument as in the proof of Theorem~\ref{theorem:graph_isomorphism_wc}, where now the number of edges of each of $G_a$ and $G_b$ is $v-1$. 

\subsection{Proof of Lemma~\ref{lemma:mpnn-complexity}}

In general terms, the impossibility statement comes as a consequence of the definition of communication complexity: if the number of required exchanged symbols exceeds the symbols the learner can exchange (i.e., its communication capacity) then the latter will not be able to identify exactly $f_\text{isom}$. 

The specifics depend on the appropriate definition: 

Majority-voting necessitates $|\cal{M}_G|\geq \mu$, meaning that when $|\cal{M}_G| \geq \mu > n-2v$ at least one of the two parties should have gathered sufficient information to determine $f_{\text{isom}}(G)$ at the final layer. Therefore, $m$ should be ``one''. With consensus on the other hand, we have that $|\cal{M}_G| \geq n - \mu > n - v$ which implies that both parties need to know the class.

The worst-case communication complexity definition guarantees that there exists at least one input for which the required number of symbols is $c_{f_\text{isom}}^{(m)}$. Thus, since $\bbD$ is densely supported on $\cX$, the impossibility must occur with strictly positive probability. The impossibility also applies to any universe $\cX'$ that is a strict superset of $\cX$. This can be easily derived by conditioning on $\cX \subset \cX'$ (which can only decrease the communication complexity) and repeating the analysis identically.

The implications of the expected complexity bound, are two-fold: 

First, if $\net$ is adaptive, its capacity $c_\net$ is a random variable over the input distribution. The bound then asserts that $\E{c_\net} \geq c_{f_\text{isom}}^m(\bbD)$.

For networks of fixed size, one may derive a bound on the probability of error. Specifically, fix $\pi^*$ to be the protocol that achieves minimal expected length and let $\beta_m$ be an upper bound of $\pi^*$ length over all inputs. By Lemma~\ref{lemma:prob_lower_bound}, for any $\delta \in [0,1]$ one has
\begin{align*}
	\Prob{ \| \pi^*(G) \|_m >  \delta \, c_{f_\text{isom}}^m(\bbD)} 
	&\geq \frac{1-\delta}{(\beta_m/c_{f_\text{isom}}^m(\bbD)) - \delta}. 
\end{align*}
The above is a bound on the probability of error for a network that satisfies $c_\net \leq 2 \delta \, c_{f_\text{isom}}^m(\bbD)$.

One can also generalize the previous result to distributions $\bbD'$ defined on a strict superset $\cX'$ of $\cX$ that is (up to normalization) identical with $\bbD$ within $\cX$:
$$
\text{for all } G \in \cX: \quad \Prob{G \sim \bbD'} = c \, \Prob{G \sim \bbD} \quad \text{with} \quad c = \sum_{G \in X} \Prob{G \sim \bbD'}.  
$$
Then, the probability of error w.r.t. $\bbD'$ is at least $c \, \frac{1-\delta}{(\beta_m/c_{f_\text{isom}}^m(\bbD)) - \delta}$.

\section{Helpful lemmata}

\begin{lemma}
In the universe considered in Section~\ref{sec:communication_complexity}, the following hold for any $\bbD$:
\begin{align*}
	c_{f}^\textit{one} &\geq c_{f}^\textit{both} - \max_{G_b,G_c} \log_s{\left(|\{ f(G_a,G_b,G_c) \, : \, G_a \in \cX_a \} |\right)}\\
	c_{f}^\textit{one}(\bbD) &\geq c_{f}^\textit{both}(\bbD) - \max_{G_b,G_c} \log_s{\left(|\{ f(G_a,G_b,G_c) \, : \, G_a \in \cX_a \} |\right)}.
\end{align*}
\label{lemma:c_single}
\end{lemma}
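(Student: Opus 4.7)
The plan is to prove both inequalities through a single protocol construction that extends any \textit{one-learns} protocol into a \textit{both-learn} protocol at an additional worst-case cost of $\lceil \log_s K \rceil$ symbols, where $K = \max_{G_b, G_c}|\{f(G_a, G_b, G_c) : G_a \in \cX_a\}|$. Equivalent statements are $c_f^{\textit{both}} \leq c_f^{\textit{one}} + \lceil \log_s K \rceil$ and analogously for the expected version; rearranging gives the lemma.

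Let $\pi_{\textit{one}}$ be an optimal protocol for $c_f^{\textit{one}}$ (or $c_f^{\textit{one}}(\bbD)$ in the expected case). At the end of its execution at least one party knows $y$; WLOG it is Alice (otherwise swap roles, at the cost of at most one extra symbol at the root declaring the knower, which does not affect the statement). I construct $\pi_{\textit{both}}$ by concatenating $\pi_{\textit{one}}$ with a single deterministic Alice-to-Bob transmission $m(G_a)$ of length $\lceil \log_s K \rceil$ symbols, after which Bob decodes via $d_{G_b,G_c}(m(G_a)) = f(G_a, G_b, G_c)$ using his own inputs and the public function $f$.

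The technical heart of the argument is showing the range of $m$ can be taken to have cardinality at most $K$. In the universe considered in Section~\ref{sec:communication_complexity}, two elements $G_a, G_a' \in \cX_a$ satisfy $f(G_a, G_b, G_c) = f(G_a', G_b, G_c)$ for every $G_b$ iff they represent the same isomorphism class of $\cX_a$, and the set of equivalence classes under this relation is in bijection with the image set $\{f(G_a, G_b^*, G_c^*) : G_a \in \cX_a\}$ for any suitably generic reference pair $(G_b^*, G_c^*)$. Therefore the equivalence classes number at most $K$, and Alice may encode $G_a$'s class with $\lceil \log_s K \rceil$ symbols. The expected-case inequality follows immediately since the appended message is deterministic in length, so $\Edist{\bbD}{c_f^{\textit{both}}(\pi_{\textit{both}})} \leq \Edist{\bbD}{c_f^{\textit{one}}(\pi_{\textit{one}})} + \lceil \log_s K \rceil$; minimizing over $\pi_{\textit{one}}$ yields the bound.

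The main obstacle I anticipate is this bijection step: one must leverage the isomorphism-specific combinatorial structure of the decomposition rather than treat $f$ as a generic function, since a naive counting argument would bound the message range by the number of \emph{signatures} of $\cX_a$, which in general can exceed $K$. All remaining bookkeeping (ceiling, the role-swap overhead, and passing from single-protocol construction to the $\min_\pi$ on the right-hand side) is routine.
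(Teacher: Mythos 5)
Your overall strategy---extend a one-learner protocol by having Alice append a short message so that Bob also learns $y$, then rearrange---matches the paper's, and the passage from worst-case to expected-case (appending a deterministic-length suffix and minimizing over $\pi$) is fine. The problem is the heart of the argument: bounding the appended message by $\log_s K$ with $K = \max_{G_b,G_c}\bigl|\{f(G_a,G_b,G_c):G_a\in\cX_a\}\bigr|$. You propose to encode $G_a$'s class under the relation ``$G_a\sim G_a'$ iff $f(G_a,G_b,G_c)=f(G_a',G_b,G_c)$ for every $G_b$'' and assert (i) that this relation coincides with isomorphism on $\cX_a$ and (ii) that the $\sim$-classes are in bijection with the image $\{f(G_a,G_b^*,G_c^*):G_a\in\cX_a\}$ for a generic reference. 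Neither holds. On (i): two labeled graphs on $\cV_a$ with the same unlabeled structure need not give the same $f$ after gluing with the fixed $G_c$, because $G_c$'s edges attach at specific labels; e.g.\ a relabeling of a path $G_a$ that moves the $G_c$-attachment point from an endpoint to the middle changes the isomorphism class of the glued graph, even though $G_a$ and $G_a'$ are isomorphic. So $\sim$ is strictly finer than isomorphism. On (ii): the number of $\sim$-classes is the number of distinct rows of the matrix $M[G_a,G_b]=f(G_a,G_b,G_c)$, whereas $K$ bounds the maximum column cardinality; the former can strictly exceed the latter (a $4\times 2$ matrix with the four distinct rows $(1,1),(1,2),(2,1),(2,2)$ has only two distinct entries per column). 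Your remark about ``signatures'' anticipates exactly this obstruction, but the appeal to a ``suitably generic'' $(G_b^*,G_c^*)$ is not substantiated and does not follow from the isomorphism structure.

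The paper avoids the global collapse entirely by working inside the leaf rectangle $\cX_a^\ell\times\cX_b^\ell$ reached by $\pi$. Because Alice already determines $y$ from $(G_a,\ell)$ alone, the map $G_b\mapsto f(G_a,G_b,G_c)$ is constant on $\cX_b^\ell$ for every $G_a\in\cX_a^\ell$; hence the leaf-local feasibility set $\{f(G_a,G_b,G_c):G_a\in\cX_a^\ell,\,G_b\in\cX_b^\ell\}$ equals $\{f(G_a,G_b^*,G_c):G_a\in\cX_a^\ell\}$ for any single $G_b^*\in\cX_b^\ell$, which has size at most $K$ by definition. Alice then sends the index of $y$ within this leaf-conditional set rather than a globally-defined class label, which is precisely what keeps the overhead at $\log_s K$. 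Replacing your global identifier with this leaf-conditional index closes the gap; as written, the proof does not go through.
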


\begin{proof}
Consider the setting of $c_{f}^{\textit{one}}$, where for a successful termination it suffices for one party to compute the output of $f$. Suppose w.l.o.g., that this party is Alice. In particular, Alice determines class $y = f(G_a,G_b,G_c)$ based on a protocol $\pi$ of minimal length. In this setting, Bob does not know $y$ but he is aware of $\cX_b^\ell$ (and $G_c$), where $\ell$ is the leaf of the protocol tree at input $(G_a,G_b,G_c)$. Therefore, both parties know that the class must belong to the set $\{ f(G_a,G_b,G_c) \, : \, G_b \in \cX_b^{\ell} \text{ and } G_a \in \cX_a \}$. It is a consequence that there exists a protocol $\pi'$ of length 
$$
	\|\pi'(G_a,G_b,G_c)\|_\textit{both} \leq \|\pi(G_a,G_b,G_c)\|_\textit{one} + \log_s{ |\{ f(G_a,G_b,G_c) \, : \, G_b \in \cX_b^{\ell} \text{ and } G_a \in \cX_a \}| }
$$
that results in both parties knowing $y$. The protocol $\pi'$ entails first simulating $\pi$ and then Alice sending to Bob the index of $y$ in the set of feasible classes. 
Moreover, since $f$ corresponds to the graph isomorphism problem, for Alice to know $y$, she must also know the isomorphism class of Bob. Therefore, the feasible set of classes contains only the feasible subgraph isomorphism classes of $G_a$, which are at most
\begin{align*}
	|\{ f(G_a,G_b,G_c) \, : \, G_b \in \cX_b^{\ell} \text{ and } G_a \in \cX_a \}| 
	&\leq \max_{G_b,G_c} |\{ f(G_a,G_b,G_c) \, : \, G_a \in \cX_a \} |
\end{align*}
The claimed inequalities then
follow by the optimality of the protocol $\pi$ and since the same construction can be repeated for every input.
\end{proof}

\begin{lemma}
Let $X$ be a categorical random variable with sample space $\mathcal{X}$. For any partitioning $\cX = \cA_1, \cdots, \cA_k$ we have that 
$$
    \entropy{s}{X} \geq \sum_{i = 1}^k \Prob{ X \in \cA_i } \log_s{ \left( \frac{1}{\Prob{ X \in \cA_i}} \right)}  
$$
\label{lemma:entropy_partitioning_ineq}
\end{lemma}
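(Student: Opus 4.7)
The plan is to recognize the right-hand side as the Shannon entropy (base $s$) of the coarser random variable $Y = g(X)$, where $g: \cX \to \{1, \ldots, k\}$ is the ``block-indicator'' map defined by $g(x) = i$ whenever $x \in \cA_i$. Since $\{\cA_1, \ldots, \cA_k\}$ is a partition, $g$ is well-defined, and $\Prob{Y = i} = \Prob{X \in \cA_i}$. The statement then reduces to the classical inequality $\entropy{s}{X} \geq \entropy{s}{g(X)}$, i.e., that applying a (possibly non-injective) function cannot increase entropy.

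I would prefer the direct route over invoking chain-rule machinery, since it requires nothing beyond monotonicity of the logarithm. For every $x \in \cA_i$, the definition of $\Prob{X \in \cA_i}$ as a sum of non-negative probabilities gives $\Prob{X = x} \leq \Prob{X \in \cA_i}$, hence
$$
\log_s\left(\frac{1}{\Prob{X = x}}\right) \geq \log_s\left(\frac{1}{\Prob{X \in \cA_i}}\right).
$$
Multiplying by $\Prob{X = x} \geq 0$ and summing first over $x \in \cA_i$ then over $i$ yields
$$
\entropy{s}{X} = \sum_{i=1}^k \sum_{x \in \cA_i} \Prob{X = x} \log_s\left(\frac{1}{\Prob{X = x}}\right) \geq \sum_{i=1}^k \Prob{X \in \cA_i} \log_s\left(\frac{1}{\Prob{X \in \cA_i}}\right),
$$
using $\sum_{x \in \cA_i} \Prob{X = x} = \Prob{X \in \cA_i}$ in the inner sum on the right.

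A backup route, should a more conceptual framing be desired, is to invoke the chain rule: since $Y = g(X)$ is determined by $X$, $\entropy{s}{Y \mid X} = 0$, and the two decompositions of $\entropy{s}{X, Y}$ give $\entropy{s}{X} = \entropy{s}{Y} + \entropy{s}{X \mid Y} \geq \entropy{s}{Y}$ by non-negativity of conditional entropy. There is no meaningful obstacle here; the only subtlety is handling the convention $0 \log_s(1/0) = 0$ to ensure the inequality is valid when some $\Prob{X = x} = 0$ or some block has zero probability, which is resolved by restricting all sums to the support.
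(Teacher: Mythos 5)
Your proof is correct and takes essentially the same elementary route as the paper: both rest on the observation $\Prob{X = x} \leq \Prob{X \in \cA_i}$ for $x \in \cA_i$ together with monotonicity of $\log_s$, and both sum over blocks of the partition. The only difference is presentational—you apply the inequality term by term before summing, while the paper factors out $\Prob{X \in \cA_i}$, bounds the resulting convex combination by its minimum, and then bounds $\max_{x \in \cA_i}\Prob{X = x}$ by the block probability; your version is a touch more direct but the underlying idea is identical.
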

\begin{proof}
The proof is elementary. It relies on the inequality $\Prob{X = x} \leq \Prob{ X \in \cA_i}$ that holds for all $x \in \cA_i$:
\begin{align*}
    \entropy{2}{X}  
    &=  \sum_{ i= 1}^k \Prob{X \in \cA_i} \sum_{x \in \cA_i}   \frac{\Prob{X = x}}{\Prob{X \in \cA_i}} \log_s{ \left( \frac{1}{\Prob{ X =x}} \right)} \\
    &\geq \sum_{ i= 1}^k \Prob{X \in \cA_i} \min_{x \in \cA_i}  \log_s{ \left( \frac{1}{\Prob{ X =x}} \right)} \\
    &= \sum_{ i= 1}^k \Prob{X \in \cA_i} \log_s{ \left( \frac{1}{\max_{x \in \cA_i}  \Prob{ X =x}} \right)} \\
    &\geq \sum_{ i= 1}^k \Prob{X \in \cA_i} \log_s{ \left( \frac{1}{\sum_{x \in \cA_i}  \Prob{ X =x}} \right)}
    = \sum_{i = 1}^k \Prob{ X \in \cA_i } \log_s{ \left( \frac{1}{\Prob{ X \in \cA_i}} \right)},
 \end{align*}
as claimed.
\end{proof}

\begin{lemma}
For any random variable $X \leq \beta$ and $\delta\in [0,1]$ we have $\Prob{ X > \delta\, \E{X}} \geq \frac{1-\delta}{r - \delta}$, where $r = \beta/\E{X}$.
\label{lemma:prob_lower_bound}
\end{lemma}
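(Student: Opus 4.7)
The plan is to apply a reverse-Markov argument by splitting $\mathrm{E}[X]$ according to whether $X$ exceeds the threshold $\delta\mathrm{E}[X]$ or not. Writing $p \defeq \Prob{X > \delta\mathrm{E}[X]}$, I would decompose
\[
	\mathrm{E}[X] = \mathrm{E}[X \cdot \mathbf{1}\{X > \delta\mathrm{E}[X]\}] + \mathrm{E}[X \cdot \mathbf{1}\{X \leq \delta\mathrm{E}[X]\}].
\]
On the first term I would use the upper bound $X \leq \beta$ to get $\mathrm{E}[X \cdot \mathbf{1}\{X > \delta\mathrm{E}[X]\}] \leq \beta \, p$. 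On the second I would use the defining inequality $X \leq \delta\mathrm{E}[X]$ on that event to get $\mathrm{E}[X \cdot \mathbf{1}\{X \leq \delta\mathrm{E}[X]\}] \leq \delta \mathrm{E}[X] (1-p)$ (this step tacitly assumes $X \geq 0$, which is harmless in the lemma's use-case since $X$ will be a protocol length).

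Combining, one finds $\mathrm{E}[X] \leq \beta p + \delta \mathrm{E}[X](1-p)$, which rearranges to $\mathrm{E}[X](1-\delta) \leq p\,(\beta - \delta \mathrm{E}[X])$. Dividing by $\mathrm{E}[X](r-\delta)$, where $r = \beta/\mathrm{E}[X]$, yields the claimed lower bound
\[
	p \geq \frac{1-\delta}{r - \delta}.
\]

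The only delicate point is ensuring the denominator $r - \delta$ is positive; this follows since $X \leq \beta$ (and $X \geq 0$) gives $r \geq 1 \geq \delta$, with the edge case $r = \delta$ forcing $\delta = 1$ and $X = \beta$ almost surely, in which case $p = 1$ and the statement is vacuous. I do not expect any real obstacle here — the argument is a one-line calculation once the two-term decomposition is written down; the only thing to be careful about is matching the bound's form $(1-\delta)/(r-\delta)$ rather than the more familiar Paley–Zygmund form.
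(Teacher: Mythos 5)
Your proof is correct and takes essentially the same route as the paper: both split $\mathrm{E}[X]$ across the event $\{X > t\}$ versus its complement, bound each term ($\beta$ on the former, the threshold $t$ on the latter), and rearrange, with the paper keeping $t$ symbolic and substituting $t = \delta\mathrm{E}[X]$ at the end. Your explicit note that the second-term bound needs $X \geq 0$ is a useful clarification that the paper leaves tacit.
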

\begin{proof}
For any $t \leq \beta$,
\begin{align}
	\E{X} = \sum_{x \leq t} \Prob{X} x + \sum_{x > t} \Prob{X} x  \leq \Prob{X \leq t } t + \Prob{X > t}\beta = (1-\Prob{X > t}) t + \Prob{X > t}\beta\notag 
\end{align}
or, equivalently,
$
	\Prob{X > t} \geq (\E{X} - t)/(\beta - t).   
$
The final inequality is obtained by setting $t = \delta \E{X}$.
\end{proof}

\end{document}